\newif\ifneurips\neuripsfalse
\renewcommand{\paragraph}[1]{\textbf{#1}}
\title{\textsc{Bandit Phase Retrieval}}
\author{Tor Lattimore and Botao Hao \\ DeepMind, London \\ \texttt{\{lattimore,bhao\}@deepmind.com}}
\date{}
\newif\ifsup\suptrue
\newcommand{\todot}[2][]{\todo[size=\scriptsize,color=red!20!white,#1]{Tor: #2}}
\definecolor{dkblue}{cmyk}{1,.54,.04,.19}
\theoremstyle{plain}
\newtheorem{theorem}{Theorem}
\newtheorem{lemma}[theorem]{Lemma}
\newtheorem{corollary}[theorem]{Corollary}
\newcommand{\ceil}[1]{\left\lceil #1 \right\rceil}
\newcommand{\floor}[1]{\left\lfloor #1 \right\rfloor}
\theoremstyle{definition}
\theoremstyle{remark}
\newcommand{\R}{\mathbb R}
\newcommand{\cA}{\mathcal A}
\newcommand{\cD}{\mathcal D}
\newcommand{\cL}{\mathcal L}
\newcommand{\cH}{\mathcal H}
\newcommand{\argmin}{\operatornamewithlimits{arg\,min}}
\newcommand{\argmax}{\operatornamewithlimits{arg\,max}}
\newcommand{\ip}[1]{\langle #1 \rangle}
\newcommand{\bip}[1]{\left\langle #1 \right\rangle}
\newcommand{\Reg}{\mathfrak{R}}
\newcommand{\norm}[1]{\Vert #1 \Vert}
\newcommand{\E}{\mathbb E}
\newcommand{\sind}{\bm 1}
\newcommand{\cC}{\mathcal C}
\newcommand{\cN}{\mathcal N}
\newcommand{\cX}{\mathcal X}
\newcommand{\cY}{\mathcal Y}
\newcommand{\cE}{\mathcal E}
\newcommand{\cF}{\mathcal F}
\newcommand{\warm}{\widehat A_w}
\newcommand{\zeros}{ \bm 0}
\newcommand{\bbP}{\mathbb P}
\newcommand{\laspan}{\operatorname{span}}
\newcommand{\bbQ}{\mathbb Q}
\newcommand{\ball}{\mathbb B}
\newcommand{\const}{\operatorname{const}}
\newcommand{\KL}{\operatorname{KL}}
\newcommand{\reg}{\mathfrak{r}}
\newcommand{\sphere}{\mathbb S}
\newcommand{\Var}{\mathbb V}
\renewcommand{\d}[1]{\operatorname{d}\!#1}
\title{Bandit Phase Retrieval}
\author{%
  Tor Lattimore\\
  DeepMind, London \\
  \texttt{lattimore@deepmind.com} \\
   \And
   Botao Hao\\
   DeepMind, London \\
   \texttt{haobotao000@gmail.com} \\
}
\begin{document}

\maketitle

\begin{abstract}
We study a bandit version of phase retrieval where the learner chooses actions $(A_t)_{t=1}^n$ in the $d$-dimensional unit ball and the expected reward is $\ip{A_t, \theta_\star}^2$ where $\theta_\star \in \R^d$ is an unknown parameter vector. We prove that the minimax cumulative regret in this problem is $\smash{\tilde \Theta(d \sqrt{n})}$, which improves on the best known bounds by a factor of $\smash{\sqrt{d}}$. We also show that the minimax simple regret is $\smash{\tilde \Theta(d / \sqrt{n})}$ and that this is only achievable by an adaptive algorithm. Our analysis shows that an apparently convincing heuristic for guessing lower bounds can be misleading and
that uniform bounds on the information ratio for information-directed sampling \citep{RV14} are not sufficient for optimal regret. 
\end{abstract}

\section{Introduction}
We study an instantiation of the low-rank bandit problem \citep{JWW19} that in the statistical setting
is called phase retrieval. Although this model is interesting in its own right,
our main focus is on the curious information structure of this problem and how it
impacts algorithm design choices. 
Notably, we were not able to prove optimal regret for standard approaches based on optimism, Thompson sampling or even information-directed sampling.
Instead, our algorithm is a variant of explore-then-commit with an adaptive exploration phase that learns to gain information at a faster rate than 
what is achievable with non-adaptive exploration.

\paragraph{Problem setting}
Let $\norm{\cdot}$ be the standard euclidean norm and $\ball^d_r = \{x \in \R^d : \norm{x} \leq r\}$ and $\sphere^{d-1}_r = \{x \in \R^d : \norm{x} = r\}$.
At the start of the game the environment secretly chooses a vector $\theta_\star \in \sphere^{d-1}_r$ with $r \in [0, 1]$ a constant that is known to the learner. The assumption that $r$ is known can be relaxed at no cost (\cref{sec:disc}).
The game then proceeds over $n$ rounds. In round $t$ the learner chooses an action $A_t \in \ball^d_1$ and observes a reward
\begin{align*}
X_t = \ip{A_t, \theta_\star}^2 + \eta_t\,,
\end{align*}
where $(\eta_t)_{t=1}^n$ is a sequence of independent standard Gaussian random variables. As is standard in bandit problems, 
the conditional law of $A_t$ should be chosen as a (measurable) function
of the previous actions $(A_s)_{s=1}^{t-1}$ and rewards $(X_s)_{s=1}^{t-1}$.
The performance of a policy $\pi$ is measured in terms of the expected regret,
\begin{align*}
\Reg_n(\pi, \theta_\star) &= \max_{a \in \ball^d_1} \E\left[\sum_{t=1}^n \left(\ip{a, \theta_\star}^2 - \ip{A_t, \theta_\star}^2\right)\right]
= r^2 - \E\left[\sum_{t=1}^n \ip{A_t, \theta_\star}^2\right]\,. 
\end{align*}
The minimax regret is $\Reg_n^\star = \sup_{r \in [0,1]} \inf_\pi \sup_{\theta_\star \in \sphere^{d-1}_r} \Reg_n(\pi, \theta_\star)$, where the infimum is over all policies.

We also study the pure exploration setting, where at the end of the game the learner uses the observed data $(A_t)_{t=1}^n$ and $(X_t)_{t=1}^n$ to
make a prediction $\widehat A_\star \in \ball^d_1$ of the optimal action. The simple regret of policy $\pi$ is
\begin{align*}
    \reg_n(\pi, \theta_\star) = \max_{a \in \ball^d_1} \E\left[\ip{a, \theta_\star}^2 - \ip{\widehat A_\star, \theta_\star}^2\right]
    = r^2 - \E\left[\ip{\widehat A_\star, \theta_\star}^2\right]\,.
\end{align*}
As expected, the minimax simple regret is
$\reg_n^\star = \sup_{r \in [0,1]} \inf_{\pi} \sup_{\theta_\star \in \sphere_r^{d-1}} \reg_n(\pi, \theta_\star)$.

\paragraph{Contributions}
Our main contribution is nearly matching upper and lower bounds on $\Reg_n^\star$.
For the simple regret we provide a near-optimal upper bound and a lower bound showing
that non-adaptive policies must be at least a factor of $\Omega(\sqrt{d})$ suboptimal.
In all of the following, $\const$ is a universal non-negative constant that may vary from one expression to the next.

\begin{theorem}\label{thm:main}
$\Reg_n^\star \leq \const d \sqrt{n \log(n) \log(d)}$.
\end{theorem}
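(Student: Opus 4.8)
The plan is to reduce the problem to estimating the direction of $\theta_\star$ and to run an explore-then-commit strategy in which exploration is made cheap by aligning the probe actions with the current estimate. Writing $\ip{a, \theta_\star}^2 = \ip{aa^\top, \theta_\star\theta_\star^\top}$ shows that the reward is linear in the rank-one matrix $\theta_\star\theta_\star^\top$, so the task is to recover its top eigenvector. Given an estimate $\widehat v \in \sphere^{d-1}_1$ with correlation $c = \ip{\widehat v, \theta_\star}$, I would decompose $\theta_\star = c\,\widehat v + \theta_\perp$ with $\theta_\perp \perp \widehat v$ and probe with actions $A = \widehat v \cos\alpha + w\sin\alpha$ for unit $w \perp \widehat v$. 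Expanding the square,
\begin{align*}
\ip{A, \theta_\star}^2 = \cos^2\alpha\, c^2 + 2\sin\alpha\cos\alpha\, c\, \ip{w, \theta_\perp} + \sin^2\alpha\, \ip{w, \theta_\perp}^2\,,
\end{align*}
so the cross term turns the measurement into a \emph{linear} observation of $\ip{w, \theta_\perp}$, with the small signal amplified by $\beta = 2\sin\alpha\cos\alpha\, c$. This is the mechanism by which adaptive exploration ``gains information at a faster rate'': the large known component $\widehat v$ acts as a local oscillator that linearises the otherwise quadratic measurement.

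The key estimation lemma I would establish is that probing with $w$ drawn isotropically from the sphere in $\widehat v^\perp$ and running least squares recovers $\theta_\perp$ with $\E\norm{\widehat\theta_\perp - \theta_\perp}^2 \approx (d-1)^2 / (\beta^2 m)$ after $m$ probes, the bias from the neglected quadratic term being lower order. Meanwhile each probe has per-round regret $r^2 - \cos^2\alpha\, c^2 \approx r^2\sin^2\alpha + (r^2 - c^2)$. Once $c$ is a constant fraction of $r$ one may take $\alpha$ constant (e.g.\ $\alpha = \pi/4$, so $\beta = c \asymp r$), spend $m$ rounds exploring, correct $\widehat v$, and commit to $\widehat v / \norm{\widehat v}$ for the remaining rounds at per-round regret of order the final squared error $\delta^2 \approx d^2/(r^2 m)$. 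Balancing the exploration cost $\const\, m\, r^2$ against the commitment cost $\const\, n\, \delta^2 = \const\, n d^2/(r^2 m)$ over the choice of $m$ gives $m \asymp d\sqrt n / r^2$ and total regret $\const\, d\sqrt n$, with the logarithmic factors entering through the high-probability confidence guarantees needed for the estimate.

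The main obstacle is the \emph{cold start}: the amplification is only useful once $\ip{\widehat v, \theta_\star}$ is already a constant fraction of $r$, whereas a blind initial guess has correlation only $\approx r/\sqrt d$, and no non-adaptive estimator can do essentially better in the worst case where $\theta_\star$ is spread evenly across coordinates. I would therefore precede the refinement with a warm-up that bootstraps the correlation — either by a spectral estimate of $\theta_\star\theta_\star^\top$ from isotropic actions, controlling $\norm{\widehat\Theta - \theta_\star\theta_\star^\top}$ via matrix concentration (which is where the $\log d$ is spent), or by running the amplification scheme through a geometric sequence of correlation levels — and then argue that the cost of reaching constant correlation is dominated by $d\sqrt n$ in the relevant range of $n$. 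Making this warm-up cost acceptable, and carefully propagating the confidence bounds across the adaptive phases so that all estimates hold simultaneously (contributing the $\log n$ factor), is the technically delicate part; the refinement-and-commit trade-off above is comparatively routine once the estimation lemma is in hand.
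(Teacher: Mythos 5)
Your architecture is the same as the paper's: explore-then-commit where the probe actions are small perturbations of a current estimate, so the known component acts as a carrier that linearises the quadratic measurement (the paper's \cref{thm:opt} plays $(1-\lambda)\warm \pm \lambda e_k$, your $\widehat v\cos\alpha + w\sin\alpha$ with the cross term $2\sin\alpha\cos\alpha\, c\,\ip{w,\theta_\perp}$ is the identical mechanism), your balancing $m \asymp d\sqrt{n}/r^2$ matches theirs, and your observation that the committed action's per-round regret is of order the \emph{squared} estimation error is exactly the curvature lemma (\cref{lem:curve}) that rescues explore-then-commit from the usual $n^{2/3}$ rate. That second phase of your sketch is essentially correct and complete in outline.

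The genuine gap is the warm-up, which you correctly identify as the obstacle but then leave as a choice between two routes, one of which provably fails. A spectral estimate of $\theta_\star\theta_\star^\top$ from isotropic (non-adaptive) actions cannot reach constant correlation cheaply enough: \cref{thm:simple-lower} shows any policy whose actions are independent of the rewards has simple regret $\Omega(\sqrt{d^3/n})$, so the non-adaptive route needs order $d^3/r^4$ rounds to reach constant-factor optimality, and after optimising over $r$ the warm-up alone costs $\Theta(d^{3/2}\sqrt{n})$ regret --- precisely the suboptimal rate of prior work that the theorem improves on. Only your second route, adaptively bootstrapping the correlation, can work, and it is not a routine appendix but the central contribution (\cref{alg:warm}, \cref{thm:warm}): the paper builds $\theta_\star$ one orthogonal direction at a time, using the accumulated vector $u$ with $\ip{u,\theta_\star} \gtrsim r\sqrt{k/d}$ as the amplifier at stage $k$, so that stage $k$ needs only about $d^2/(k r^4)$ samples, and the harmonic sum gives total warm-up time $\const (d^2/r^4)\log(n)\log(d)$, hence warm-up regret $\min\left(n r^2,\ \const d^2 \log(n)\log(d)/r^2\right) \leq \const d\sqrt{n\log(n)\log(d)}$. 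Your sketch stops exactly where this argument begins; the stage-dependent sample sizes, the constant-probability event that a random direction in the orthogonal complement carries squared correlation at least $r^2/d$, the stopping rules, and the sign-consistency of the accumulated components are all still to be supplied, and without them the claimed $d\sqrt{n}$ bound does not follow.
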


\begin{theorem}\label{thm:lower}
$\Reg_n^\star \geq \const d \sqrt{n}$
whenever $d$ is larger than a suitable universal constant and $n$ is sufficiently large. 
\end{theorem}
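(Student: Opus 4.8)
The plan is to prove the bound by an information-theoretic (Assouad-type) argument built on a \emph{small-radius} hard instance, where the squared reward makes information about the direction of $\theta_\star$ intrinsically scarce. Fix a scale $\epsilon \in (0,1]$ to be tuned and, for each sign vector $\sigma \in \{-1,+1\}^d$, set $\theta_\sigma = \tfrac{\epsilon}{\sqrt d}\sigma \in \sphere^{d-1}_\epsilon$, so that we work at radius $r = \epsilon$. Writing $\bbP_\sigma$ for the law of the interaction under $\theta_\sigma$, the per-round regret of an action $a \in \ball^d_1$ equals $\tfrac{\epsilon^2}{d}\big(d - \ip{a,\sigma}^2\big)$, and a direct calculation lower bounds this by $\const\,\tfrac{\epsilon^2}{d}$ times a coordinate-wise alignment error (essentially the number of $i$ with $\sgn(a_i)\neq\sigma_i$), reducing the problem to $d$ coupled sign-estimation tasks. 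Summing over rounds gives $\Reg_n(\pi,\theta_\sigma)\ge\const\,\tfrac{\epsilon^2}{d}\sum_{t}\sum_{i}\bbP_\sigma(\sgn(A_{t,i})\neq\sigma_i)$.

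The second step is the key divergence computation, which exploits the squared reward. Because the noise is unit-variance Gaussian, the per-round KL divergence between $\bbP_\sigma$ and $\bbP_{\sigma^{\oplus i}}$ under action $a$ equals $\tfrac12(\ip{a,\theta_\sigma}^2 - \ip{a,\theta_{\sigma^{\oplus i}}}^2)^2$. Expanding the difference of squares (only coordinate $i$ changes) and using $\norm{a}\le 1$ gives, after summing over the $d$ coordinates, a per-round bound $\sum_i \KL \le \const\,\epsilon^4/d$. The crucial point is the \emph{quartic} dependence on $\epsilon$: one factor $\epsilon^2$ comes from the perturbation gap and a second from the magnitude $\ip{a,\theta_\sigma}^2 = O(\epsilon^2)$ of the signal itself, which is small precisely because the radius is small. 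Thus the total information about $\sigma$ after $n$ rounds is at most $\const\,\epsilon^4 n/d$, regardless of how the learner explores.

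For the final step I would run a cumulative, time-indexed version of Assouad's lemma. For each coordinate $i$ and round $t$, the averaged testing error $\bbP_\sigma(\sgn(A_{t,i})\neq\sigma_i)$ is at least $\tfrac12(1 - \mathrm{TV}_{t,i})$, where $\mathrm{TV}_{t,i}$ is the total variation between $\bbP_\sigma$ and $\bbP_{\sigma^{\oplus i}}$ restricted to data observed up to round $t$. Pinsker's inequality followed by Cauchy--Schwarz across coordinates and rounds yields $\sum_t\sum_i \mathrm{TV}_{t,i} \le \sum_t \sqrt{\tfrac{d}{2}\sum_i \KL_{t,i}} \le \const\,\epsilon^2 n^{3/2}$, using that the cumulative divergence up to round $t$ is at most $\const\,\epsilon^4 t/d$. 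Hence the summed alignment error is at least $\tfrac12 nd - \const\,\epsilon^2 n^{3/2}$, and choosing $\epsilon^2 = \const\,d/\sqrt n$ (with the constant small enough to absorb the negative term) keeps it $\ge \const\,nd$. Substituting back, the Bayesian, hence minimax, regret is at least $\const\,\tfrac{\epsilon^2}{d}\cdot nd = \const\,\epsilon^2 n = \const\,d\sqrt n$. The constraint $\epsilon = r \le 1$ forces $\const\,d/\sqrt n \le 1$, i.e.\ $n \ge \const\,d^2$ (``$n$ sufficiently large''), while the coordinate-counting reduction needs $d$ above a universal constant.

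The main obstacle is the \emph{coupling of information to regret} that is hidden in the choice of radius. With a small radius the divergence bound $\sum_i \KL \le \const\,\epsilon^4/d$ holds using only $\norm{a}\le 1$, so a pure Assouad argument closes. Had I instead anchored the instances near the boundary (radius $r=\Theta(1)$ with a size-$\epsilon$ perturbation), the divergence would be only quadratic in the perturbation and the learner could in principle gather enough information; the bound would then require showing that \emph{acquiring} that information is itself costly in regret, a delicate coupling and exactly the kind of heuristic trap flagged in the abstract. Getting the time-indexed bookkeeping right --- in particular that $\sum_t \mathrm{TV}_{t,i}$, and not merely the terminal total variation, stays below $nd$ --- is the other place where care is needed.
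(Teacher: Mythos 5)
Your divergence bookkeeping is essentially sound: the per-round bound $\tfrac12\bigl(\ip{a,\theta_\sigma}^2-\ip{a,\theta_{\sigma^{\oplus i}}}^2\bigr)^2\le\const\,\epsilon^4a_i^2/d$, its sum $\const\,\epsilon^4/d$ over coordinates using only $\norm{a}\le1$, the cumulative Pinsker/Cauchy--Schwarz step, and the scaling $\epsilon^2=\const\,d/\sqrt n$ would all deliver $\const\,d\sqrt n$ \emph{if} your first step held. The genuine gap is in that first step: the reduction of the per-round regret to coordinate-wise sign errors is false, because the reward is invariant under $\theta\mapsto-\theta$. The inequality $d-\ip{a,\sigma}^2\ge\#\{i:\sgn(a_i)\ne\sigma_i\}$ holds only when $\ip{a,\sigma}\ge0$; the action $a=-\sigma/\sqrt d$ has zero regret while every coordinate sign is ``wrong''. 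Worse, the same symmetry makes your Assouad step vacuous rather than merely lossy: since $\bbP_\sigma=\bbP_{-\sigma}$, the mixture of the $\bbP_\sigma$ over $\{\sigma:\sigma_i=+1\}$ is \emph{identical} to the mixture over $\{\sigma:\sigma_i=-1\}$, so the testing bound $\bbP(\sgn(A_{t,i})\ne\sigma_i)\ge\tfrac12$ holds trivially for every $t$ and every policy; it is true but carries no information about the regret, which is exactly what the counterexample above exploits. Any repair must quotient by the global sign --- e.g.\ use $d-\ip{a,\sigma}^2\ge\min_{s\in\{\pm1\}}\#\{i:\sgn(sa_i)\ne\sigma_i\}\ge\tfrac d2-\tfrac12|\ip{\sgn(a),\sigma}|$ and then control $\E[\ip{\sgn(A_t),\sigma}^2]$, whose off-diagonal terms involve the learnable pairwise products $\sigma_i\sigma_j$ rather than individual coordinates. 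That looks doable with your divergence estimates, but it is a genuinely different and more delicate argument than the one you wrote, and it is where the ``coupling of information to regret'' you flag actually lives --- not in the choice of radius, which is fine.

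For contrast, the paper's proof takes a different route entirely: it places the uniform measure on $\sphere^{d-1}_r$, changes measure to the zero-parameter bandit via the likelihood ratio $Z_t=\sum_{s\le t}X_s\ip{A_s,\theta}^2-\tfrac12\ip{A_s,\theta}^4$, and uses a stopping time to show that for some $\theta$, with probability at least $1/2$, $\sum_t\ip{A_t,\theta}^4\le1$, whence $\sum_t\ip{A_t,\theta}^2\le\sqrt n$ by Cauchy--Schwarz and the regret is at least $(nr^2-\sqrt n)/2$ with $r^2=\Theta(d/\sqrt n)$. The stopping time is needed there because, under the spherical prior, the per-round information $\ip{A_t,\theta}^4$ is small only \emph{on average over the prior} ($\approx r^4/d^2$) and can be as large as $r^4$ for an aligned action --- the adaptivity issue you correctly anticipate in your final paragraph, and which your single-coordinate perturbations happen to sidestep because their divergence is bounded uniformly over actions.
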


\begin{theorem}\label{thm:simple-upper}
$\reg_n^\star \leq \const d \sqrt{\log(n) \log(d) / n}$.
\end{theorem}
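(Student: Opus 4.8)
The plan is to obtain the simple-regret bound as an essentially immediate consequence of the cumulative-regret bound in \cref{thm:main}, via an online-to-batch conversion. The key structural observation is that the reward is an affine functional of the lifted action: writing $\Sigma_\star = \theta_\star \theta_\star^\top$, we have $\ip{a, \theta_\star}^2 = \ip{aa^\top, \Sigma_\star}$, so $a \mapsto \ip{a, \theta_\star}^2$ is linear in $aa^\top$. Consequently a mixture of actions attains the mixture of their rewards, and this is exactly what makes averaging over the trajectory lossless rather than merely giving an inequality.

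Concretely, fix $r$ and let $\pi$ be the cumulative-regret policy witnessing \cref{thm:main}, so that $\sup_{\theta_\star} \Reg_n(\pi, \theta_\star) \le \const\, d\sqrt{n \log(n)\log(d)}$; since $r$ is known to the learner this policy is available. Run $\pi$ for the full horizon, draw $J$ uniformly from $\{1,\dots,n\}$ independently of the interaction, and recommend $\widehat A_\star = A_J$. Taking expectations over both the trajectory and $J$,
\begin{align*}
\reg_n(\pi, \theta_\star)
= r^2 - \E\big[\ip{A_J, \theta_\star}^2\big]
= \frac{1}{n}\sum_{t=1}^n \Big(r^2 - \E\big[\ip{A_t, \theta_\star}^2\big]\Big)
= \frac{1}{n}\,\Reg_n(\pi, \theta_\star)\,.
\end{align*}
Taking the supremum over $\theta_\star \in \sphere^{d-1}_r$ (the identity holds for each $\theta_\star$) and then over $r \in [0,1]$, and invoking \cref{thm:main}, gives $\reg_n^\star \le \Reg_n^\star / n \le \const\, d\sqrt{\log(n)\log(d)/n}$, as required.

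There is no deep obstacle once \cref{thm:main} is in hand, so the main point requiring care is that the recommendation produced by this argument is randomised. This is in fact essential for a lossless conversion: the averaged second-moment matrix $\bar M = \tfrac1n \sum_{t=1}^n A_t A_t^\top$ generally has rank greater than one, and without knowledge of $\theta_\star$ there is no deterministic rule extracting a single action $a$ with $\ip{a, \theta_\star}^2 \ge \ip{\bar M, \Sigma_\star}$; randomising over the played actions sidesteps this and yields the exact identity above. Finally, because $\pi$ is the adaptive explore-then-commit policy of \cref{thm:main}, the resulting predictor is itself adaptive, which is consistent with the companion lower bound asserting that non-adaptive strategies are necessarily $\Omega(\sqrt{d})$ suboptimal for simple regret.
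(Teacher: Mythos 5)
Your proposal is correct and follows essentially the same route as the paper: run the cumulative-regret policy of \cref{thm:main}, recommend an action sampled uniformly from the played actions, and observe that the expected simple regret equals $\Reg_n(\pi,\theta_\star)/n$. The extra remarks on linearity in the lifted action and the necessity of a randomised recommendation are sound but not needed for the argument.
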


\begin{theorem}\label{thm:simple-lower}
Assume that $n \geq d \geq 8$. Then there exists an $r \in [0,1]$ such that for all policies $\pi$ with $(A_t)_{t=1}^n$ independent of $(X_t)_{t=1}^n$,
   $\sup_{\theta_\star \in \sphere^{d-1}_r} \reg_n(\pi, \theta_\star) \geq \const \sqrt{d^3/n}$.
\end{theorem}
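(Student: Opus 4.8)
The plan is to prove this via a Bayesian argument in which the hard instances are a \emph{random direction}. Let $\E$ denote expectation over both the prior and the observations. I would write $\theta_\star = r v$ with $v$ drawn uniformly from $\sphere^{d-1}_1$ and the radius $r \in [0,1]$ tuned at the end. Letting the direction be uniform over the \emph{whole} sphere (rather than perturbing a known base direction) is the crucial modelling choice: a fixed probe cannot be aligned with an unknown random direction, which is exactly the handicap a non-adaptive learner suffers. Under this prior the reward of an action $a$ is $\ip{a, \theta_\star}^2 = r^2 \ip{a, v}^2$, the optimal action is $v$, and writing $\widehat A_\star = \rho \widehat u$ with $\norm{\widehat u} = 1$ and $\rho \le 1$ gives the clean reduction
\begin{align*}
\sup_{\theta_\star \in \sphere^{d-1}_r} \reg_n(\pi, \theta_\star) \ge \E[\reg_n(\pi, rv)] \ge r^2\, \E\!\left[1 - \ip{\widehat u, v}^2\right] = r^2\, \E[\sin^2 \psi]\,,
\end{align*}
where $\psi = \angle(\widehat u, v)$. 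It therefore suffices to show that a non-adaptive learner cannot estimate the direction $v$ to within constant squared-sine error unless $n$ is very large.

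The second step bounds the information the learner can acquire. Because the hypothesis guarantees $(A_t)_t$ is independent of $(X_t)_t$, I may condition on the realised action sequence $a_{1:n}$; conditionally on $v$ the rewards $X_t = r^2 \ip{a_t, v}^2 + \eta_t$ are then independent, so subadditivity of entropy gives $I(v; \mathrm{data}) \le \sum_t I(v; X_t) \le \tfrac12 \sum_t \Var_v(r^2 \ip{a_t, v}^2)$. The crux is the elementary moment computation that for fixed $a_t$ with $\norm{a_t} \le 1$ and $v$ uniform on the sphere, $\E_v \ip{a_t, v}^2 = \norm{a_t}^2/d$ and $\Var_v(\ip{a_t, v}^2) = O(1/d^2)$, whence
\begin{align*}
I(v; \mathrm{data}) \le \const\, \frac{r^4 n}{d^2}\,.
\end{align*}
This $1/d^2$ factor --- each fixed probe carries only $O(1/d)$ of mean signal about a random direction, with fluctuations of the same order --- is precisely the penalty of non-adaptivity and is where the extra factor of $d$ over the adaptive rate comes from. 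It is essential here that the actions are independent of the rewards, so that the per-round informations add.

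The third step turns this budget into an estimation lower bound on the sphere. Since $v \to \mathrm{data} \to \widehat u$ is a Markov chain, data processing together with a standard covering/Fano (rate--distortion) argument for the uniform law on $\sphere^{d-1}_1$ under distortion $\sin^2\angle(\cdot,\cdot)$ shows that any estimator with $\E[\sin^2\psi] \le \delta$ obeys $I(v;\mathrm{data}) \ge \const\, d \log(1/\delta) - \const\, d$; equivalently, once $I(v;\mathrm{data}) = O(d)$ the direction cannot be localised and $\E[\sin^2\psi] \ge \const$. I would then tune $r$ to sit at this threshold: taking $r^2 \asymp \sqrt{d^3/n}$ makes $I(v;\mathrm{data}) \asymp d$, forces $\E[\sin^2\psi] \ge \const$, and hence $\sup_{\theta_\star} \reg_n(\pi,\theta_\star) \ge \const\, r^2 \ge \const \sqrt{d^3/n}$, as claimed. (This needs $r \le 1$, so the bound is non-vacuous exactly when $\sqrt{d^3/n} \lesssim 1$; otherwise one caps $r = 1$ and the regret is $\Omega(1)$.)

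The main obstacle is this third step: converting an $O(d)$-bit budget into a constant lower bound on $\E[\sin^2\psi]$ over the $(d-1)$-sphere. I expect to make it rigorous by replacing the continuous prior with the uniform law on an explicit, approximately isotropic $\tfrac12$-packing of the sphere of exponential size $2^{\Omega(d)}$, so the variance estimate of the second step still yields $I = O(d)$, and then applying Fano directly, the packing's pairwise separation guaranteeing that any misidentification incurs constant $\sin^2$ distortion. A secondary but clarifying point is to check that the naive local construction --- perturbing a \emph{known} base direction --- is genuinely weaker: revealing the base direction lets the learner concentrate its probes and only produces a bound of order $d^2/n$, so the global uniform-direction prior is what supplies the extra factor needed to reach $\sqrt{d^3/n}$.
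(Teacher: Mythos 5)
Your proposal follows essentially the same route as the paper: a Bayesian prior uniform on the sphere, a mutual-information (equivalently, averaged-KL) budget of order $nr^4/d^2$ obtained from the fourth spherical moment $\E_v[\ip{a,v}^4]=3\norm{a}^4/(d^2+2d)$ and the non-adaptivity assumption (which lets the expectation over actions be pulled outside the integral over $\theta$), and Fano to conclude, with $r^2 \asymp \sqrt{d^3/n}$ tuned at the information threshold $I \asymp d$. The only substantive difference is in how Fano is instantiated: the paper applies a continuous Fano inequality \citep[Lemma 5]{gerchinovitz2020fano} directly to the Haar measure, bounding the denominator by the spherical small-ball estimate $\sigma(\{\theta : \ip{\widehat A_\star,\theta}^2 \geq \tfrac34 r^2\}) \leq e^{-3d/16}$, whereas you discretise to a $2^{\Omega(d)}$-packing and apply finite Fano. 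Your version works but leaves one piece of genuine labour that the paper's version avoids: after discretisation the fourth-moment bound $\E_v[\ip{a,v}^4]=O(1/d^2)$ must be re-verified \emph{uniformly over all probe directions $a$} for the finite prior (your ``approximately isotropic packing''), which requires a net-plus-concentration argument over random packings; the continuous Fano route keeps the exact Haar moments and sidesteps this entirely. Your side remark that a local perturbation of a known base direction only yields $d^2/n$ is consistent with the paper's emphasis that the global uniform prior is what produces the extra $\sqrt{d}$.
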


We also show that worst-case bounds on the information ratio for information-directed sampling are \textit{not} sufficient to achieve optimal
regret.
Our results suggest that the conjectured lower bounds for low-rank bandits \citep{JWW19,LMT21} are not true and that existing upper bounds are loose.
The same phenomenon may explain the gap between upper and lower bounds for bandit principle component analysis \citep{KN19}, as we discuss in \cref{sec:disc}.

\paragraph{Notation}
The first $n$ integers are $[n] = \{1,2,\ldots,n\}$ and the standard basis vectors in $\R^d$ are $e_1,\ldots,e_d$. 
The span of a collection of vectors is denoted by $\laspan(v_1,\ldots,v_m)$ and the orthogonal complement of a linear subspace $V \subset \R^d$ 
is $V^\perp = \{x \in \R^d : \ip{x, y} = 0 \text{ for all } y \in V\}$.
The mutual information between random elements $X$ and $Y$ on the same probability space is $I(X; Y)$ and the relative entropy between probability measures $P$ and $Q$ on the same measurable space is $\KL(P, Q)$.
The dimension of a set $\Theta \subset \R^d$ is defined as the dimension of the affine hull of $\Theta$.

\section{Related work}\label{sec:related}

\paragraph{Phase retrieval}
Phase retrieval is a classical problem in signal processing and statistics \citep{candes2015phase, candes2015phase1, cai2016optimal, chen2017solving, chen2019gradient, sun2018geometric}. These works are focused
on learning $\theta_\star$ where the covariates $(A_t)_{t=1}^n$ are 
uncontrolled, either random or fixed design.

\paragraph{Linear bandits}
Our problem can be written as a stochastic linear bandit by noticing that $\ip{A_t, \theta_\star}^2 = \ip{A_t A_t^\top, \theta_\star \theta_\star^\top}$, where
the inner product between matrices on the right-hand side should be interpreted coordinate-wise and the
action set is $\{aa^\top : a \in \ball^d_1\}$. 
There is an enormous literature on stochastic linear bandits \citep{auer2002using, dani2008stochastic, rusmevichientong2010linearly, chu2011contextual, abbasi2011improved}. 
This reduction immediately yields an upper bound on the minimax regret of $O(d^2 \sqrt{n} \log(n))$.

\paragraph{Low-rank bandits}
Low-rank bandits are a kind of linear bandit where the environment is determined by an unknown matrix and the actions of the learner are also matrices.
Let $\cE \subset \R^{d_1 \times d_2}$ and $\cA \subset \R^{d_1 \times d_2}$. 
A low-rank bandit problem over $\cE$ and with actions $\cA$ is characterised by a matrix $\Theta_\star \in \cE$. The learner plays actions $A_t \in \cA$ and the reward is $X_t = \ip{A_t, \Theta_\star} + \eta_t$, where $\eta_t$ is noise and the inner product between matrices is interpreted coordinate-wise. So far this is nothing more than a complicated way of defining a linear bandit. The name comes from the fact that in general elements of $\cE$ are assumed to be low-rank.
The precise nature of the problem is determined by assumptions on $\cE$ and the action set $\cA$. Our setup is recovered by assuming that $\cE = \{xx^\top : x \in \sphere^{d-1}_r\}$ and $\cA = \{x x^\top : x \in \ball_1^d\}$.

\cite{JWW19} assume that $\cE$ consists of rank $p$ matrices and $\cA = \{x y^\top : x \in \cX, y \in \cY\}$ for some reasonably bounded sets $\cX \subset \R^{d_1}$ and $\cY \subset \R^{d_2}$. They prove that the regret is bounded by $\tilde O((d_1+d_2)^{3/2} \sqrt{p n})$. These results cannot be applied directly to the phase retrieval bandit because of the product assumption on the action set.
\cite{LMT21} retain the assumption that $\cE$ consists of rank-$p$ matrices, but relax the product form of the action set (while also allowing for generalised linear models). Relying only on mild boundedness assumptions, they show that the regret can be bounded by $\tilde O((d_1 + d_2)^{3/2} \sqrt{pn})$. For the bandit phase retrieval problem, $d_1 = d_2 = d$ and $p = 1$, so this algorithm yields an upper bound on the regret for bandit phase retrieval of $\tilde O(d^{3/2} \sqrt{n})$. Both \cite{JWW19} and \cite{LMT21} conjecture that their upper bounds are optimal. Our results show that this is not true 
for this sub-problem, despite the fact that the heuristic argument used by these authors 
holds in this case, as we explain in \cref{sec:info}. We summarize these comparisons in Table \ref{table:compar}.

Some authors use a model where the noise is in the parameter rather than additive, which means the reward is $\ip{A_t, \Theta_t}$ with $(\Theta_t)_{t=1}^n$ an independent and identically distributed sequence of low-rank matrices (with unknown distribution). For example, \cite{katariya2017stochastic, katariya2017bernoulli} and \cite{trinh2019solving} assume that $\Theta_t$ is rank-$1$ almost surely and $\cA = \{ e_i e_j^\top : 1 \leq i,j \leq d\}$, which means the learner is trying to identify the largest entry in a matrix.

\paragraph{Adversarial setting}
A similar problem has been studied in the adversarial framework by \cite{KN19}.
They assume that $(\theta_t)_{t=1}^n$ is a sequence of vectors chosen in secret by an adversary at the start of the
game and the learner observes $\ip{A_t, \theta_t}^2$.
They design an algorithm for which the regret is at most $\tilde O(d \sqrt{n})$, while the best lower bound is $\Omega(\sqrt{d n})$.

\begin{table}
\centering
\caption{For the low-rank bandits column, $p$ is the rank. We ignore logarithmic factors and universal constant. Note, the $dp\sqrt{n}$ lower bound derived by \cite{LMT21} does not apply to bandit phase retrieval because it makes use of the richer structure of the more general model.}
\label{table:compar}
\scalebox{0.75}{
\renewcommand{\arraystretch}{1.2}
\begin{tabular}{|l|c|c|c|} 
 \hline \textbf{Upper bounds} & Bandit phase retrieval& Low-rank bandits & Pure exploration\\ 
 \hline
\cite{abbasi2011improved} & $O(d^2\sqrt{n})$ & $O(d^2\sqrt{n})$ &  N/A \\
  \hline
\cite{JWW19, LMT21}
  &$O(d^{3/2}\sqrt{n})$ & $O(d^{3/2}\sqrt{pn})$ & N/A\\
 \hline
 This work
  &$O(d\sqrt{n})$ & N/A & $O(d/\sqrt{n})$\\
  \hline 
  \textbf{Lower bounds} & & & \\ 
   \hline 
\cite{LMT21}
  &N/A& $\Omega(dp\sqrt{n})$ & N/A\\
 \hline
   This work
  &$\Omega(d\sqrt{n})$ &$\Omega(d\sqrt{n})$ & $\Omega(d/\sqrt{n})$\\
   \hline
   This work (non-adaptive learning)
  &N/A &N/A & $\Omega(d^{3/2}/\sqrt{n})$\\
  \hline
\end{tabular}}

\end{table}

\section{Information-theoretic heuristics and information-directed sampling}\label{sec:info}

\citeauthor{JWW19} [\citeyear{JWW19}, \S5] argue by comparing the signal to noise ratios between linear and low-rank bandits that the minimax regret for problems like bandit phase retrieval should be lower bounded by $\Omega(d^{1.5} \sqrt{n})$. We make this argument a little more formal and explain why it does not yield the right answer in this instance.
Suppose that $\theta_\star$ is sampled uniformly from $\mathbb S_r^{d-1}$ and the learner takes an action $A \in \mathbb B_1^d$ and observes $X = \ip{A, \theta_\star}^2 + \eta$ with
$\eta\sim N(0,1)$. What is the information gained by the learner?
A symmetry argument shows that all actions on the unit sphere have the same information gain, so let's just fix some $A \in \mathbb S_1^{d-1}$. A Taylor series expansion yields the approximation
\begin{align}
I(\theta_\star ; Y) = \E[\KL(\bbP_{X |\theta_{\star}}, \bbP_X)]
&\approx \frac{1}{2} \E\left[(\E[X | \theta_{\star}] - \E[X])^2\right] \nonumber \\
&= \frac{r^4}{2} \left(\frac{3}{d^2 + 2d} - \frac{1}{d^2}\right)
\leq \frac{r^4}{d^2}\,.
\label{eqn:information_gain}
\end{align}
Note that $\frac{d}{2} \log(n)$ bits are needed to code $\theta_\star$ to reasonable accuracy. So if we presume 
that the rate of information learned
\emph{stays the same} throughout the learning process, then over $n$ rounds the learner can only obtain $O(n r^4 / d^2)$ bits by Eq.~\eqref{eqn:information_gain}.
By setting $r^2 = d^{3/2} \sqrt{\log(n) / n}$ one could be led to believe that the learner 
cannot identify the optimal direction
and the regret would be $\Omega(n r^2) = \Omega(d^{3/2} \sqrt{n \log(n)})$.
The main point is that the rate of information accumulated by a careful learner \emph{increases} over time.

\paragraph{Information-directed sampling}
Our upper bound and the observation above has an important implication.
Suppose as above that $\theta_\star$ is sampled uniformly from $\sphere^{d-1}_r$ and let $A$ be a possibly randomised action.
Since the learner cannot know the realisation of $\theta_\star$ initially, her expected regret for any action 
$a \in \mathbb B^d_1$ is $\Delta(a) = r^2 - \E[\ip{a, \theta_\star}^2] = r^2(1 - \norm{a}^2/d) \geq r^2(1 - 1/d)$. 
On the other hand, as outlined above, the information gain about $\theta_\star$ is about
$r^4 / d^2$. Together these results show that the information ratio is bounded by
\begin{align*}
\Psi \triangleq \frac{\E[\Delta(A)]^2}{I(\theta_\star ; X, A)} 
= \Theta(d^2)\,.
\end{align*}
Since the entropy of a suitable approximation of the optimal action is about $\frac{d}{2} \log(n)$, an application
of the information theoretic analysis by \cite{RV14} suggests that the Bayesian regret can be bounded by
$O(d^{3/2} \sqrt{n \log(n)})$, 
which is suboptimal. This time the problem is that we have used the worst-case bound on the information ratio, without
taking into account the possibility that the information ratio might decrease over time.
We should mention here that a decreasing information ratio was exploited by \cite{DVX21} in a recent analysis of 
Thompson sampling for finite-armed bandits, but there the gain was less dramatic (a logarithm of the number of arms) and no changes to the algorithm were required.

\section{Algorithm for bandit phase retrieval}\label{sec:alg}

We start by showing that \cref{thm:main} holds if the learner is given an action that is constant-factor optimal. In the next section we explain how such an action can be identified with low regret.
Our algorithm uses the explore-then-commit design principle, which is usually only sufficient for $O(n^{2/3})$ regret. The reason we are able to obtain $O(n^{1/2})$ regret is because of the curvature of the action set, a property that was exploited in a similar way in online learning by \cite{HuLaGySz17} and in partial monitoring by \cite{KLK20}.

\begin{theorem}\label{thm:opt}
Suppose the learner is given an action $\warm \in \ball^d_1$ such that $\ip{\warm, \theta_\star}^2 \geq \alpha r^2$ for some universal constant $\alpha \in (0,1]$.
Then there exists a policy $\pi$ for which the regret is at most $\Reg_n(\pi, \theta_\star) \leq \const \cdot d \sqrt{n \log(n)}$.
\end{theorem}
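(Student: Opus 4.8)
The plan is to use an explore-then-commit policy that exploits the warm start $\warm$ together with the curvature of the reward map $a \mapsto \ip{a, \theta_\star}^2$. The key observation is that perturbing $\warm$ in a direction $v$ changes the reward by a term that is \emph{linear} in $\ip{v, \theta_\star}$ with slope $2\ip{\warm, \theta_\star}$, and this slope is bounded away from zero precisely because $\ip{\warm, \theta_\star}^2 \geq \alpha r^2$. Concretely, I would first replace $\warm$ by $(1-\epsilon)\warm$, so that the perturbed actions stay in $\ball^d_1$ (this weakens $\alpha$ only by the constant $(1-\epsilon)^2$), and then, for a fixed orthonormal basis $v_1, \dots, v_d$ and a constant $\epsilon = \Theta(1)$, play the paired actions $(1-\epsilon)\warm \pm \epsilon v_i$. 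Writing $\mu = \ip{(1-\epsilon)\warm, \theta_\star}$, the two expected rewards are $(\mu \pm \epsilon \ip{v_i, \theta_\star})^2$, so their \emph{difference} has mean $4 \epsilon \mu \ip{v_i, \theta_\star}$: the constant offset $\mu^2$ and the quadratic curvature term $\epsilon^2 \ip{v_i, \theta_\star}^2$ cancel exactly, leaving an unbiased linear measurement of $\ip{v_i, \theta_\star}$.

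Averaging $k$ such pairs per direction yields an estimate of $\ip{v_i, \theta_\star}$ with variance $\Theta(1/(\epsilon^2 \mu^2 k)) = \Theta(1/(r^2 k))$, using $\mu^2 \geq \const\, \alpha r^2$. Letting $D_i$ denote the averaged paired difference for direction $i$, I would form $\hat\theta = \sum_i D_i v_i$ and commit to the normalized action $\hat A_\star = \hat\theta / \norm{\hat\theta}$. The point of normalizing is that the common unknown scalar $4\epsilon\mu$ (its sign included) divides out, so $\mu$ need never be estimated and the sign ambiguity is harmless since $\pm\theta_\star/r$ are both optimal. Writing $\tilde\theta = \sum_i \tilde m_i v_i$ for the rescaled estimator with $\tilde m_i = D_i/(4\epsilon\mu)$, we have $\hat A_\star = \tilde\theta/\norm{\tilde\theta}$ and, with $m = 2dk$ exploration rounds, $\E[\norm{\tilde\theta - \theta_\star}^2] = \Theta(d^2/(r^2 m))$.

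It remains to convert this into regret. During exploration each round costs at most $r^2$, so the exploration regret is at most $\const\, r^2 m$. For the commit phase, writing $\delta = \norm{\tilde\theta - \theta_\star}$ and $s = \ip{\tilde\theta - \theta_\star, \theta_\star}$, a direct calculation gives $r^2 - \ip{\hat A_\star, \theta_\star}^2 = (r^2 \delta^2 - s^2)/\norm{\tilde\theta}^2 \leq 4\delta^2$ whenever $\delta \leq r/2$, while trivially the per-round regret never exceeds $r^2$. Hence the commit regret is at most $4 (n - m)\, \E[\delta^2] + n r^2\, \bbP(\delta > r/2)$. Since $r$ is known, I would set $m = \min\{n, \lceil \const\, d\sqrt{n \log n}/r^2 \rceil\}$: the first term is then $\Theta(n d^2/(r^2 m)) = O(d\sqrt{n \log n})$ and balances the exploration regret $\const\, r^2 m = O(d\sqrt{n\log n})$, while the $\log n$ factor pushes the threshold $r^2/4$ well above $\E[\delta^2]$, so a chi-squared tail bound makes $\bbP(\delta > r/2)$ small enough that the bad-event term is negligible. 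When $r$ is so small that this $m$ exceeds $n$ (equivalently $r^2 \lesssim d\sqrt{\log n /n}$), there is no commit phase and the trivial bound $\Reg_n \leq n r^2 \leq \const\, d\sqrt{n\log n}$ already suffices.

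The main obstacle I anticipate is the commit-phase analysis, where the map from estimation error to regret is nonlinear because of the normalization $\tilde\theta \mapsto \tilde\theta/\norm{\tilde\theta}$: one must establish the clean quadratic bound $\leq 4\delta^2$ on the event that the estimate is accurate and separately control the rare event that it is not, and this must hold uniformly over the unknown $r \in [0,1]$ and all $d$. Verifying that a single choice of $m$, with its $\log n$ factor, simultaneously balances exploration against commit regret and tames the tail across these regimes is the crux; everything else reduces to the variance computation for the difference estimator and a standard concentration inequality.
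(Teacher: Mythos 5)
Your proposal is correct, and the overall architecture (perturb a shrunken $\warm$ by $\pm\lambda e_k$, explore for $m \approx d\sqrt{n\log n}/r^2$ rounds, commit to the normalised estimate, and use the curvature of $a \mapsto \ip{a,\theta_\star}^2$ near the optimum to get a quadratic rather than linear dependence on the estimation error) is exactly the paper's. Where you genuinely diverge is the estimation step. The paper runs a \emph{constrained nonlinear least squares} over all exploration data (with the constraint $\ip{\warm,\theta}\geq r\sqrt{\alpha}$ to pin down the sign), invokes a covering-number concentration bound (\cref{thm:conc}/\cref{cor:conc}) to control $\sum_t\ip{A_t,\hat\theta-\theta_\star}^2\ip{A_t,\hat\theta+\theta_\star}^2$, and then must separately lower-bound the factor $\ip{A_t,\hat\theta+\theta_\star}$ by $\Omega(\sqrt{\alpha}\,r)$ using the constraint and the choice of $\lambda$. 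You instead observe that the paired difference of rewards at $(1-\epsilon)\warm\pm\epsilon v_i$ is an exactly unbiased \emph{linear} measurement of $\ip{v_i,\theta_\star}$ with slope $4\epsilon\mu$, which yields a closed-form Gaussian estimator with $\E[\norm{\tilde\theta-\theta_\star}^2]=\Theta(d^2/(r^2 m))$ and sidesteps the non-convex optimisation entirely (a computational issue the paper itself flags in its discussion). Your commit-phase identity $r^2-\ip{\widehat A_\star,\theta_\star}^2=(r^2\delta^2-s^2)/\norm{\tilde\theta}^2\leq 4\delta^2$ for $\delta\leq r/2$ is a direct derivation of the paper's \cref{lem:curve}. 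Two small remarks: your estimator has no covering-number $\log(m)$ in its risk, and the bad event needs no chi-squared tail bound at all, since $n r^2\,\bbP(\delta>r/2)\leq 4n\E[\delta^2]$ by Markov is already of the same order as the main commit term; so your route in fact proves the slightly sharper bound $\const\, d\sqrt{n}$ for this theorem (the logarithms in \cref{thm:main} then come only from the warm-start phase).
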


\begin{proof}
By choosing the sign of $\theta_\star$, assume without loss of generality that $\ip{\warm, \theta_\star} \geq r \sqrt{\alpha}$. Let
\begin{align*}
m = \ceil{4d \sqrt{n \log(n)} / r^2} \qquad \text{and} \qquad \lambda = \min\left(\frac{1}{2},\, \frac{\sqrt{\alpha}}{4}\right) \,.
\end{align*}
If $m \geq n$, then the regret of any policy is upper bounded $n r^2 \leq \const d \sqrt{n \log(n)}$, so for the rest of the proof we assume that $m < n$.
For the first $m$ rounds the policy cycles over the 
$2d$ actions $\{(1 - \lambda) \warm \pm \lambda e_k : k \in [d]\}$. 
The constrained least squares estimator of $\theta_\star$ based on the data collected over $m$ rounds is
\begin{align}\label{eqn:CLS}
\hat \theta = \argmin \{\cL(\theta) : \theta \in \ball^d_r \text{ and } \ip{\warm, \theta} \geq r \sqrt{\alpha}\}\,,
\end{align}
where $\cL(\theta) = \frac{1}{2} \sum_{t=1}^m (X_t - \ip{A_t, \theta}^2)^2$.
For the remaining $n - m$ rounds the algorithm plays $A_t = \widehat A = \hat \theta / \norm{\hat \theta}$. Then,
\begin{align}
\beta 
&\triangleq 9 + d \log(98m) \\
\tag{by \cref{cor:conc}}  &\geq \E\left[\sum_{t=1}^m \ip{A_t, \hat \theta - \theta_\star}^2 \ip{A_t, \hat \theta + \theta_\star}^2\right] \nonumber \\
&\geq \frac{\alpha \norm{\theta_\star}^2}{4} \E\left[\sum_{t=1}^m \ip{A_t, \hat \theta - \theta_\star}^2\right] \label{eq:alg1} \\
&\geq \frac{\alpha \lambda^2 \norm{\theta_\star}^2}{2} \left(\frac{m}{2d}-1\right) \E\left[\norm{\hat \theta - \theta_\star}^2\right]\,.  \label{eq:alg2}
\end{align}
where in \cref{eq:alg1} we used that for some $k \in [d]$,
\begin{align*}
\ip{A_t, \hat \theta + \theta_\star} 
&= \ip{(1 - \lambda)\warm \pm \lambda e_k, \hat \theta + \theta_\star}\geq (1 - \lambda) \ip{\warm, \hat \theta + \theta_\star} - \lambda \norm{\hat \theta + \theta_\star} \\
&\geq 2(1 - \lambda) \sqrt{\alpha} r - 2\lambda r 
\tag{by definition of $\lambda$} \geq \frac{\sqrt{\alpha}}{2}\norm{\theta_\star}\,.
\end{align*}
\cref{eq:alg2} follows because for any $k \in [d]$,
\begin{align*}
\sum_{\sigma \in \pm 1}
\ip{(1 - \lambda) \warm + \sigma \lambda e_k, \hat \theta - \theta_\star}^2 
&= 2(1 - \lambda)^2 \ip{\warm, \hat \theta - \theta_\star}^2 + 2 \lambda^2 \ip{e_k, \hat \theta - \theta_\star}^2 \\
& \geq 2 \lambda^2 \ip{e_k, \hat \theta - \theta_\star}^2\,,
\end{align*}
which implies that
\begin{align*}
\sum_{t=1}^m \ip{A_t, \hat \theta - \theta_\star}^2
&\geq \floor{\frac{m}{2d}} \sum_{k=1}^d \left(2\lambda^2 \ip{e_k, \hat \theta - \theta_\star}^2\right) 
\geq 2\lambda^2 \left(\frac{m}{2d} - 1\right) \norm{\hat \theta - \theta_\star}^2\,. 
\end{align*}
Rearranging \cref{eq:alg2} and using the definition of $\beta$ shows that
\begin{align*}
\E\left[\norm{\hat \theta - \theta}^2\right] \leq \frac{2 \beta}{\alpha \lambda^2 \norm{\theta_\star}^2} \frac{1}{\frac{m}{2d} - 1} \leq \const \frac{d^2 \log(m)}{m \norm{\theta_\star}^2} \,.
\end{align*}
Letting $a_\star = \theta_\star / \norm{\theta_\star}$ be the optimal action,
the regret is bounded by
\begin{align*}
\Reg_n(\pi, \theta_\star) 
&\leq m \norm{\theta_\star}^2 + n \E\left[\ip{a_\star, \theta_\star}^2 - \ip{\widehat A, \theta_\star}^2\right] \\
&= m \norm{\theta_\star}^2 + n \E\left[\ip{a_\star - \widehat A, \theta_\star} \ip{a_\star + \widehat A, \theta_\star}\right] \\
&\leq m \norm{\theta_\star}^2 + 2n \E\left[\left|\ip{a_\star - \widehat A, \theta_\star}\right| \norm{\theta_\star}\right] \\
\tag{by \cref{lem:curve}} &\leq m \norm{\theta_\star}^2 + 4n \E\left[\norm{\theta_\star - \hat \theta}^2\right] \\
&\leq m \norm{\theta_\star}^2 + \const \frac{nd^2 \log(m)}{m \norm{\theta_\star}^2} \leq \const d \sqrt{n \log(n)}\,. \qedhere
\end{align*}
\end{proof}
\section{Finding a constant-factor optimal action}\label{sec:warm}

To establish \cref{thm:main} we show there exists an algorithm that interacts with the bandit for a random number of rounds
and outputs an action $\warm$ that with high probability satisfies $\ip{\warm, \theta_\star}^2 \geq r^2 / 64$. 
Furthermore, the procedure suffers small regret in expectation. 

\begin{theorem}\label{thm:warm}
Let $T$ be the random number of rounds that \cref{alg:warm} interacts with the bandit, which cannot be more than $n$, and let $\warm \in \sphere^{d-1}_1$ be its output.
Then,
\begin{enumerate}
\item $\E[T] \leq \const \frac{d^2}{r^4} \log(n) \log(d)$.
\item With probability at least $1 - 1/n$, either $T = n$ or $\ip{\warm, \theta_\star}^2 \geq \frac{r^2}{64}$.
\end{enumerate}
\end{theorem}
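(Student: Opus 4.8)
The plan is to reduce the entire theorem to a single per-phase \emph{quality-doubling} lemma and then to assemble phases with a random restart. For a unit vector $w$ call $\alpha(w) = \ip{w,\theta_\star}^2/r^2 \in [0,1]$ its \emph{quality}, so the goal is to output $\warm$ with quality $\geq 1/64$. I would have \cref{alg:warm} maintain a current unit vector $w_i$ and, writing $\theta_\star = \ip{w_i,\theta_\star}\, w_i + \psi_i$ with $\psi_i = P_{w_i^\perp}\theta_\star$ and $\bar\psi_i = \psi_i/\norm{\psi_i}$, devote one phase to estimating the orthogonal direction $\bar\psi_i$ and then update $w_{i+1} \propto w_i + \beta_i \hat{\bar\psi}_i$ for a suitable step $\beta_i$.

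First I would isolate the estimation primitive. Fixing $w_i$ and an orthonormal basis $u_1,\dots,u_{d-1}$ of $w_i^\perp$, the two actions $\tfrac{1}{\sqrt 2}(w_i \pm u_k)$ produce a reward gap with mean $2\ip{w_i,\theta_\star}\ip{u_k,\theta_\star} = 2\sqrt{\alpha_i}\,r\,\ip{u_k,\theta_\star}$; averaging over $m/d$ pairs per coordinate yields an estimate $\hat c_k$ of $\ip{u_k,\theta_\star}$ of variance $\Theta(1/(\alpha_i r^4 (m/d)))$. Summing over coordinates, the estimate $\hat{\bar\psi}_i$ of the unit vector $\bar\psi_i$ satisfies $\E\,\norm{\hat{\bar\psi}_i-\bar\psi_i}^2 = \sigma_i^2 = \Theta(d^2/(\alpha_i r^4 m))$. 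A direct maximisation of $\ip{w_i+\beta\hat{\bar\psi}_i,\theta_\star}^2/\norm{w_i+\beta\hat{\bar\psi}_i}^2$ over $\beta$ shows that the updated quality is at least $2\alpha_i$ as soon as $\sigma_i^2 \leq 1/(2\alpha_i)$, with optimal step $\beta_i = \Theta(\sqrt{\alpha_i})$.

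The key point, and the heart of the proof, is that the condition $\sigma_i^2 \leq 1/(2\alpha_i)$ is equivalent to $m \geq \const\, d^2/r^4$, \emph{independently of $\alpha_i$}: the $\sqrt{\alpha_i}$ signal in the reward gap contributes a factor $1/\alpha_i$ to $\sigma_i^2$, and this is exactly cancelled because doubling from a small quality $\alpha_i$ only requires recovering $\bar\psi_i$ to overlap $\alpha_i$ rather than to constant accuracy. This is the precise form of the ``information rate increases over time'' phenomenon of \cref{sec:info}. Upgrading to high probability needs $m = \const\, d^2\log(n)/r^4$, so that the $\chi^2$-type fluctuation of $\norm{\hat{\bar\psi}_i-\bar\psi_i}^2$ stays below $1/(2\alpha_i)$ with probability $1-1/n^2$ (concentration of the flavour of \cref{cor:conc}); since $\beta_i$ depends on the unknown $\alpha_i$, each phase also spends $\const\, d^2\log(n)/r^4$ rounds playing $A_t=w_i$ to pin $\alpha_i$ to a constant factor.

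It remains to assemble the phases and handle initialisation. I would begin each attempt from a uniformly random $w_0\in\sphere^{d-1}_1$, which has quality $\geq c/d$ with a constant probability $p_0$; on this event, $L=\ceil{\log_2(d/(64c))}=O(\log d)$ doubling phases produce quality $\geq 1/64$. As $\alpha_0$ is unknown, the algorithm runs a fixed $L$ phases and then \emph{verifies} the output with $\const\log(n)/r^4$ plays of $A_t=w_L$, distinguishing quality $\geq r^2/32$ from $<r^2/64$ with probability $1-1/n^2$; on failure it restarts with a fresh $w_0$, stopping with $T=n$ only when the round budget is exhausted. One attempt costs $O(\log d)\cdot \const\, d^2\log(n)/r^4$ rounds, and since each attempt succeeds with probability $\geq p_0/2$ the expected number of attempts is $O(1)$, giving $\E[T]\leq \const\, d^2\log(n)\log(d)/r^4$ (and $\E[T]\leq n$ trivially when that budget exceeds $n$). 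For the second claim, a union bound over the $O(\log d)$ phases, the at most $n$ attempts, and the verification steps shows that stopping with $T<n$ and quality $<r^2/64$ forces a concentration or verification failure, an event of probability $\leq 1/n$. I expect the main obstacle to be the $\alpha$-free per-phase lemma: arranging the exploration design, the step $\beta_i$, and the concentration so that the cost of doubling genuinely does not grow as the quality shrinks.
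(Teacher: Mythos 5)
Your architecture is genuinely different from the paper's, and the difference is worth recording. The paper never doubles the quality of a single direction: it accumulates up to $\Theta(d)$ mutually orthogonal components $w=\ip{v,\hat\theta}v$, each of squared norm at least $\tau^2=r^2/d$, and the acceleration appears as a per-iteration cost $m_k=\Theta(d^2\log(n)/(kr^4))$ that decreases harmonically because the running aggregate $u$ satisfies $\ip{u,\theta_\star}^2\geq k\tau^2/4$; the $\log d$ there is the harmonic sum $\sum_k 1/k$. You instead keep one unit vector, estimate the entire $(d-1)$-dimensional orthogonal residual each phase at a flat cost $\Theta(d^2\log(n)/r^4)$, and take $O(\log d)$ doubling phases, so the $\log d$ is the phase count. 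Both schemes exploit the same cross term $2\ip{u,\theta_\star}\ip{v,\theta_\star}$ in the reward of $(u\pm v)/\sqrt 2$ and the same cancellation between the $\sqrt{\alpha_i}$ signal and the $\sqrt{\alpha_i}$ accuracy requirement. The paper's version only ever estimates one scalar per inner loop, so \cref{cor:conc} and a trivial union bound suffice; yours needs a $\chi^2$-type bound on $\norm{\hat\psi_i-\psi_i}^2$ over $d-1$ coordinates, which is workable but an extra moving part.

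That said, your per-phase lemma as stated has two concrete errors. First, the doubling condition and the step size do not match. Writing $\cos\phi=\ip{\hat{\bar\psi}_i,\bar\psi_i}$, the best achievable new quality is
\begin{align*}
\max_\beta\ \frac{\bigl(\sqrt{\alpha_i}\,r+\beta\norm{\psi_i}\cos\phi\bigr)^2}{r^2(1+\beta^2)}=\alpha_i+(1-\alpha_i)\cos^2\phi\,,
\end{align*}
attained at $\beta=\cos\phi\sqrt{(1-\alpha_i)/\alpha_i}$. Doubling therefore needs $\cos^2\phi\gtrsim\alpha_i$, and in the regime your budget actually guarantees ($\cos\phi=\Theta(\sqrt{\alpha_i})$) the correct step is $\beta_i=\Theta(1)$, not $\Theta(\sqrt{\alpha_i})$: with $\beta_i=\Theta(\sqrt{\alpha_i})$ the quality improves only by a factor $1+\Theta(\sqrt{\alpha_i})$ and the phase count blows up well beyond $O(\log d)$. (A fixed $\beta=\sqrt 2$ works whenever $\cos\phi\sqrt{1-\alpha_i}\geq\sqrt{2\alpha_i}$.) Relatedly, the condition ``$\sigma_i^2\leq 1/(2\alpha_i)$'' is vacuous as written, since $\norm{\hat{\bar\psi}_i-\bar\psi_i}^2\leq 4$ for unit vectors; the meaningful condition is on the unnormalised relative error, $\E\norm{\hat\psi_i-\psi_i}^2\leq\const\norm{\psi_i}^2/\alpha_i$, which does yield the $\alpha$-free requirement $m\geq\const d^2/r^4$. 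Second, running a fixed $L$ phases and verifying only at the end is unsafe: once $\alpha_i>1/3$ doubling is impossible and a phase can shrink the quality by a constant factor (the update divides by $1+\beta^2$ while $\norm{\psi_i}$ is too small for its direction to be estimated), so the remaining phases after a lucky start can leave you polynomially far below $1/64$. You already spend rounds each phase pinning $\alpha_i$ to a constant factor, so the fix is to terminate the phase loop as soon as $\hat\alpha_i\geq 1/32$; with that and the corrected step size your argument goes through and delivers both claims of the theorem.
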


What is interesting about \cref{alg:warm} is that it uses what it has learned in early iterations to increase the
statistical efficiency of its estimation.

\lstset{emph={
    for, loop, until, do, to, if, else, input, return},
    emphstyle={\bfseries},
    basicstyle=\linespread{1.8},
    commentstyle=\color{green}\ttfamily,
    escapechar=£,
    numbers=left,
    xleftmargin=5.0ex,
}%

\begin{algorithm}
\begin{lstlisting}[mathescape=true]
$\tau = r / \sqrt{d}$ and $m = \ceil{\frac{8}{\tau^4} \log\left(2n^2\right)}$ £\label{line:m1}£
do
  sample $v$ uniformly from $\sphere^{d-1}_1$
  play $A_t = v$ for $m$ rounds and compute average reward $\bar X$
loop until $\bar X \geq \tau^2$  £\label{line:stop1}£
$\cE = \{v \sqrt{\bar X}\}$
for $k = 2$ to $d$
  $\displaystyle \beta = 9(\log(98) + 4 \log(n))$ and $\displaystyle m = \ceil{\frac{64 d^2 \beta}{kr^4}}$ and $\displaystyle u = \frac{\sum_{w \in \cE} w}{\norm{\sum_{w \in \cE} w}}$ £\label{line:u}£
  do
    sample $v$ uniformly from $\laspan(\cE)^\perp \cap \sphere^{d-1}_1$ £\label{line:v}£
    play $A_t \in \{(u+v)/\sqrt{2}, (u-v)/\sqrt{2}, u\}$ for $3m$ rounds 
    find least squares estimator $\hat \theta$  constrained to $\ball^d_1$ 
  loop until $\ip{v, \hat \theta}^2 \geq \tau^2$ 
  $\cE = \cE \cup \{ \ip{v, \hat \theta} v\}$
loop until $\sum_{w \in \cE} \norm{w}^2 \geq r^2/16$ £\label{line:stop}£
return $\warm = \frac{\sum_{w \in \cE} w}{\norm{\sum_{w \in \cE} w}}$
\end{lstlisting}
\caption{
The procedure operates in $d$ iterations. The first iteration is implemented in Lines 1--5 and the remaining $d-1$ iterations in Lines 7--15. 
}
\label{alg:warm}
\end{algorithm}

\begin{proof}
Note that the vectors $u$ and $v$ computed in each iteration are orthogonal, which means that $\norm{(u+v)/\sqrt{2}} = \norm{(u-v)/\sqrt{2}} = \norm{v} = 1$. 
Hence the actions of the algorithm are always in $\ball^d_1$.
The main argument of the proof is based on an induction to show that with high probability when the execution of the algorithm ends, there exists an $s \in \{\pm 1\}$ 
such that for all $w \in \cE$,
\begin{enumerate}
\item[(a)] $w \in \cE$, $\norm{w}^2 \geq \tau^2$; and
\item[(b)] $s \ip{w, \theta_\star} \in [\frac{1}{2}\norm{w}^2, 2 \norm{w}^2]$.
\end{enumerate}
We proceed in five steps. First, we prove that if the above holds and the algorithm halts before $n$ rounds are over, then the vector returned
is a suitable approximation of $\theta_\star / \norm{\theta_\star}$. Second, we upper bound the probability of certain bad events.
In the third and fourth steps we prove the base case and induction step for (a) and (b). In the last step we bound the expected running time.

\paragraph{Step 1: Correctness}
Suppose that (a) and (b) above hold and the algorithm halts at the end of iteration $k$. Then,
\begin{align*}
\ip{\warm, \theta_\star}^2 
=\bip{\frac{\sum_{w \in \cE} w}{\norm{\sum_{w \in \cE} w}},\, \theta_\star}^2
\geq \frac{1}{4} \sum_{w \in \cE} \norm{w}^2
\geq \frac{r^2}{64}\,.
\end{align*}
where the first inequality follows from orthogonality of $w \in \cE$ and (b) above.
The second inequality follows from the stopping condition in Line~\ref{line:stop} of \cref{alg:warm}, (a) above and the definition of $\tau$. Part (2) of the theorem follows by showing that (a) and (b) above hold with probability at least $1 - 1/n$. 

\paragraph{Step 2: Failure events}
The algorithm computes some kind of estimator at the end of each do/loop.
Since the algorithm cannot play more than $n$ actions, the number of estimators computed is naively upper bounded by $n$.
A union bound over all estimates and the concentration bounds in \cref{lem:gaussian} and \cref{cor:conc}
show that with probability at least $1 - 1/n$ the following both hold:
\begin{itemize}[leftmargin=*]
\item For all $v$ sampled in the first iteration and corresponding average rewards $\bar X$,
\begin{align}
\left|\bar X - \ip{v, \theta_\star}^2\right| \leq \sqrt{\frac{2 \log(2n^2)}{m}} = \frac{\tau^2}{2}\,,
\label{eq:warm0}
\end{align}
where $m$ is defined in Line~\ref{line:m1} of \cref{alg:warm}.
\item Let $\cD = (A_s)_s$ be the actions played in the inner loop of some iteration $k \geq 2$ and $\hat \theta$ be the corresponding least-squares estimator. 
Then,
\begin{align}
\sum_{a \in \cD} \ip{a, \hat \theta - \theta_\star}^2 \ip{a, \hat \theta + \theta_\star}^2 \leq 9(\log(98) + 4 \log(n)) \triangleq \beta\,.
\label{eq:warm1}
\end{align}
\end{itemize}
We assume both of the above hold in all relevant iterations for the remainder.

\paragraph{Step 3: Base case}
The next step is to show that (a) and (b) hold with high probability after the first iteration.
Consider the operation of the algorithm in the inner loop. After sampling $v \in \sphere^{d-1}_1$, the algorithm plays $v$ for $m$ rounds and computes
the average reward.
Let $v$ be the last sampled action before the iteration halts and $w = v \sqrt{\bar X}$.
By the stopping condition in Line~\ref{line:stop1}, $\norm{w}^2 = \bar X \geq \tau^2$.
Without loss of generality, we choose the sign of $\theta_\star$ so that $\ip{v, \theta_\star} \geq 0$. Then by \cref{eq:warm0},
\begin{align*}
\ip{w, \theta_\star}
= \sqrt{\bar X} \ip{v, \theta_\star}
\in \left[\frac{1}{2} \norm{w}^2, 2 \norm{w}^2\right]\,. 
\end{align*}
This establishes the base case.

\paragraph{Step 4: Inductive step}
Assume that (a) and (b) above hold for $\cE$ at the end of iteration $k$. Let $u$ be the value computed in Line~\ref{line:u} of \cref{alg:warm}. Then,
\begin{align*}
\ip{u, \theta_\star}
&= \frac{\sum_{w \in \cE} \ip{w, \theta_\star}}{\sqrt{\sum_{w \in \cE} \norm{w}^2}} 
\geq \frac{1}{2} \sqrt{\sum_{w \in \cE} \norm{w}^2} 
\geq \frac{\tau \sqrt{k}}{2}\,.
\end{align*}
Let $\cA = \{(u + v)/\sqrt{2}, (u - v)/\sqrt{2}, v\}$, which is the set of actions played in the inner loop of iteration $k+1$ after sampling $v$ for the last time. 
Let $\hat \theta$ be the corresponding least-squares estimate. We consider two cases. First, if $\ip{u, \hat \theta + \theta_\star} \geq 2 |\ip{v, \hat \theta + \theta_\star}|$, then by \cref{eq:warm1},
\begin{align*}
\frac{\beta}{m}
&\geq \sum_{a \in \cA} \ip{a, \theta_\star - \hat \theta}^2 \ip{a, \theta_\star + \hat \theta}^2 \\ 
&\geq \frac{1}{16} \ip{u + v, \theta_\star - \hat \theta}^2 \ip{u, \theta_\star}^2 + \frac{1}{16} \ip{u - v, \theta_\star - \hat \theta}^2 \ip{u, \theta_\star}^2 \\
&\geq \frac{1}{8} \ip{v, \theta_\star - \hat \theta}^2 \ip{u, \theta_\star}^2 \geq \frac{k r^2}{16d} \ip{v, \theta_\star - \hat \theta}^2\,. 
\end{align*}
Rearranging shows that
\begin{align}
\ip{v, \theta_\star - \hat \theta}^2 \leq \frac{16d \beta}{m k r^2} \leq \frac{\tau^2}{4}\,. 
\label{eq:warm}
\end{align}
For the second case, $\ip{u, \hat \theta + \theta_\star} \leq 2 |\ip{v, \hat \theta + \theta_\star}|$. Then,
\begin{align*}
\frac{\beta}{m}
\geq \sum_{a \in \cA} \ip{a, \theta_\star - \hat \theta}^2 \ip{a, \theta_\star + \hat \theta}^2 
\geq \frac{1}{4} \ip{v, \hat \theta - \hat \theta}^2 \ip{u, \hat \theta + \theta_\star} 
\geq \frac{k r^2}{8d} \ip{v, \theta_\star - \hat \theta}^2\,.
\end{align*}
And again, \cref{eq:warm} holds.
Summarising, $\ip{v, \hat \theta}$ is an estimator of $\ip{v, \theta_\star}$ up to accuracy $\tau / 2$.
By the definition of the algorithm, the iteration only ends if $|\ip{v, \hat \theta}| \geq \tau$.
Therefore, with $w = \ip{v, \hat \theta} v$, we have
$\norm{w}^2 = \ip{v, \hat \theta}^2 \geq \tau^2$.
Furthermore, 
$
\ip{w, \theta_\star} = \ip{v, \hat \theta} \ip{v, \theta_\star} 
\in [\norm{w}^2/2, 2 \norm{w}^2]\,.
$
Therefore if (a) and (b) hold for $\cE$ computed after iteration $k$, they also hold for $\cE$ computed after iteration $k+1$.

\paragraph{Step 5: Running time}
The length of an iteration is determined by the corresponding value of $m$ and the number of samples of $v$. The former is an iteration-dependent constant, while the latter depends principally on how many samples are needed before $|\ip{v, \theta_\star}|$ is suitably large. The law of $\nu$ 
is the uniform distribution on $\sphere^{d-1}_1 \cap \laspan(\cE)^\perp$, which is the uniform distribution on a sphere of dimension $d-1-|\cE|$ embedded in $\R^d$. 
The squared norm of the projection of $\theta_\star$ onto $\laspan(\cE)^\perp$ is
\begin{align*}
\norm{\theta_\star}^2 - \sum_{w \in \cE} \frac{\ip{\theta_\star, w}^2}{\norm{w}^2}
\geq r^2 - 4\sum_{w \in \cE} \norm{w}^2 
\geq \frac{r^2}{2}\,,
\end{align*}
where we used (a) of the induction and the stopping condition in Line~\ref{line:stop}.
Therefore, when $v$ is sampled uniformly from $\sphere^{d-1}_1 \cap \laspan(\cE)^\perp$, by \cref{lem:sphere},
\begin{align*}
\bbP\left(\ip{v, \theta_\star}^2 \geq 3\tau^2/2\right)
&= \bbP\left(\ip{v, \theta_\star}^2 \geq \frac{3r^2}{2d}\right) 
\geq \const > 0\,,
\end{align*}
Furthermore, by the concentration analysis in the previous step, an iteration will end once a $v$ has been sampled
for which $\ip{v, \theta_\star}^2 \geq 3\tau^2 / 2$. 
Hence, the expected number of times the algorithm samples $v$ per iteration is constant and a simple calculation using 
the definition of $m$ in Lines~\ref{line:m1} and \ref{line:u} shows that the expected number of rounds used by the
algorithm is at most 
\begin{align*}
\E[T] &\leq \const \frac{d^2}{r^4} \log\left(n\right) \log(d)\,.
\qedhere
\end{align*}
\end{proof}

\section{Proof of \cref{thm:main} and \cref{thm:simple-upper}}

\begin{proof}[Proof of \cref{thm:main}]
Run \cref{alg:warm} and if it halts, feed the returned action to the input of the explore-then-commit algorithm analysed in \cref{thm:opt}.
\cref{alg:warm} fails to return suitable action with probability at most $1/n$, so the contribution of this event to the regret
is negligible. By \cref{thm:warm}, the regret incurred by \cref{alg:warm} is bounded by
\begin{align*}
\E\left[\sum_{t=1}^T \left(r^2 - \ip{A_t, \theta_\star}^2\right)\right]
&\leq r^2 \E[T] \\
&\leq \min\left(n r^2, \const \frac{d^2}{r^4} \log(d) \log(n)\right) \\
&\leq \const d \sqrt{n \log(d) \log(n)}\,.
\end{align*}
Combining this with the regret bound established in \cref{thm:opt} yields the result.
\end{proof}

\begin{proof}[Proof of \cref{thm:simple-upper}]
We use a standard reduction \cite[Chapter 33]{LS20book}.
Let $\pi$ be the policy used in the proof of \cref{thm:main} with $\widehat A_\star$ sampled uniformly from $(A_t)_{t=1}^n$. By \cref{thm:main},
\begin{align*}
    \reg_n(\pi, \theta_\star) 
    &= \frac{1}{n} \left(n r^2 -  \E\left[ \sum_{t=1}^n \ip{A_t, \theta_\star}^2\right]\right) 
    = \frac{\Reg_n(\pi, \theta_\star)}{n} 
    \leq \const d \sqrt{\frac{\log(n) \log(d)}{n}}\,.
\end{align*}
\end{proof}

\section{Lower bounds}\label{sec:lower}

The proof of \cref{thm:lower} is based on a direct analysis of the Bayesian regret via a change of measure argument. 
Compared to the usual arguments based on Pinsker's inequality, the idea used here does not rely on any boundedness assumptions.
What is important to show is that when $\theta_\star$ is sampled uniformly from $\sphere^{d-1}_r$, then for suitably small $r$ the learner cannot learn much about $\theta_\star$. The proof of \cref{thm:simple-lower} makes use of Fano's inequality and can be found in \cref{app:lower}.
The technique used in the proof below also yields a lower bound that matches the upper bound in \cref{thm:simple-upper} up to logarithmic factors.

\begin{proof}[Proof of Theorem~\ref{thm:lower}]
Let $r$ be a positive constant to be tuned subsequently and $\sigma$ be the uniform (Haar) measure on
$\sphere^{d-1}_r$. Let $\pi$ be an arbitrary policy.
Given $\theta \in \R^d$, let $\bbP_\theta$ be the measure on interaction sequences induced by the interaction between $\pi$ and the bandit determined by $\theta$. 
Let $\bbP = \sigma \otimes \bbP_\theta$ be the product of $\sigma$ and probability kernel $(\bbP_\theta : \theta \in \sphere^{d-1}_r)$.
As usual, let $\E_\theta$ denote the expectation with respect to $\bbP_\theta$ and $\E$ the expectation with respect to $\bbP$.
Let $Z_t = \sum_{s=1}^t X_s \ip{A_s, \theta}^2 - \frac{1}{2} \ip{A_s, \theta}^4$.
Then,
\begin{align*}
&\E[\ip{A_t, \theta}^4 \sind(Z_{t-1} \leq \gamma)]
= \int_{\sphere_r^{d-1}} \E_\theta[\ip{A_t, \theta}^4 \sind(Z_{t-1} \leq \gamma)] \d{\sigma}(\theta) \\
&= \int_{\sphere_r^{d-1}} \E_{\zeros}\left[\ip{A_t, \theta}^4 \sind(Z_{t-1} \leq \gamma) \exp\left(-\frac{1}{2} \sum_{s=1}^{t-1}  \left((X_s - \ip{A_s, \theta}^2)^2 - X_s^2\right)\right)\right] \d{\sigma}(\theta)\\
&= \int_{\sphere_r^{d-1}} \E_{\zeros} \left[\ip{A_t, \theta}^4 \sind(Z_{t-1} \leq \gamma) \exp(Z_{t-1})\right] \d{\sigma}(\theta) \\
&\leq \exp(\gamma) \int_{\sphere_r^{d-1}} \E_{\zeros}\left[\ip{A_t, \theta}^4\right] \d{\sigma}(\theta) 
= \frac{3\exp(\gamma) r^4}{d^2 + 2d}\,.
\end{align*}
Next, define a random time
\begin{align*}
    \tau = \min\left\{t : Z_t \geq \gamma \text{ or } \sum_{s=1}^t \ip{A_s, \theta}^4 \geq 1\right\}
\quad \text{with } \gamma = 1 + \log(4)\,.
\end{align*}
Now,
\begin{align*}
\{\tau \leq n\}
\subset \left\{\sum_{t=1}^\tau X_s \ip{A_s, \theta}^2 - \frac{3}{2} \ip{A_s, \theta}^4 \geq \gamma - 1 \right\}
\cup \left\{\sum_{t=1}^n \ip{A_s, \theta}^4 \sind(\tau \geq t) \geq 1\right\}\,.
\end{align*}
By a union bound,
\begin{align*}
\bbP(\tau \leq n) 
&\leq \underbracket{\bbP\left(\sum_{t=1}^\tau X_s \ip{A_s, \theta}^2 - \frac{3}{2} \ip{A_s, \theta}^4 \geq \gamma -1\right)}_{\textrm{(A)}} \\
&\qquad+ \underbracket{\bbP\left(\sum_{t=1}^n \ip{A_s, \theta}^4 \sind(\tau \geq t) \geq 1\right)}_{\textrm{(B)}}\,.
\end{align*}
By the definition of $X_s$, Markov's inequality and the moment generating function of a Gaussian,
\begin{align*}
\textrm{(A)} 
&\leq \exp(1-\gamma) \E\left[\exp\left(\sum_{t=1}^\tau X_s \ip{A_s, \theta}^2 - \frac{3}{2} \ip{A_s, \theta}^4\right)\right]
\leq \exp(1-\gamma) = \frac{1}{4}\,.
\end{align*}
Meanwhile, by Markov's inequality, 
\begin{align*}
\textrm{(B)} \leq \E\left[\sum_{t=1}^n \ip{A_t, \theta}^4 \sind(\tau \geq t)\right]
\leq \frac{6 \exp(\gamma) n r^4}{d^2 + 2d} = \frac{24 \exp(1) n r^4}{d^2 + 2d} = \frac{1}{4}\,,
\end{align*}
where the last equality follows by choosing $r^2 = \sqrt{(d^2 + 2d)/(96 \exp(1)n)}.$
Hence, $\int_{\sphere^{d-1}_r} \bbP_\theta(\tau \leq n) \d{\sigma}(\theta) = \bbP(\tau \leq n) \leq 1/2$ and so
there exists a $\theta \in \sphere^{d-1}_r$ such that
\begin{align*}
\bbP_\theta\left(\sum_{t=1}^n \ip{A_t, \theta}^4 \leq 1\right) \geq 1/2\,.
\end{align*}
Suppose that $\sum_{t=1}^n \ip{A_t, \theta}^4 \leq 1$, then
by Cauchy-Schwarz, 
\begin{align*}
\sum_{t=1}^n \ip{A_t, \theta}^2 
\leq \sqrt{n \sum_{t=1}^n \ip{A_t, \theta}^4}
\leq \sqrt{n} \,.
\end{align*}
Therefore, $\bbP_\theta(\sum_{t=1}^n \ip{A_t, \theta}^2 \leq \sqrt{n}) \geq 1/2$ and
\begin{align*}
\Reg_n(\pi, \theta) 
&=\E_\theta\left[\sum_{t=1}^n \left(r^2 - \ip{A_t, \theta}^2\right)\right] \\
&\geq \bbP_\theta\left(\sum_{t=1}^n \ip{A_t, \theta}^2 \leq \sqrt{n}\right) (n r^2 - \sqrt{n}) \\
&\geq \frac{nr^2 - \sqrt{n}}{2}\,.
\end{align*}
The result follows from choice of $r$.
\end{proof}

\section{Discussion}\label{sec:disc}

\paragraph{Unknown radius}
The assumption that $r = \norm{\theta_\star}$ is known to the learner is easily relaxed by estimating $\norm{\theta_\star}$.
Note first that all our analysis holds with only trivial modifications if $r \in [\frac{1}{2} \norm{\theta_\star}, \norm{\theta_\star}]$.
Next, if $A$ is sampled uniformly from $\sphere^{d-1}_1$ and $X = \ip{A, \theta_\star}^2 + \eta$ and $\eta$ is a standard Gaussian, then
$\E[X] = \frac{1}{d} \norm{\theta_\star}^2$ and $\Var[X] = 1 + 2(d-1)/(d^3 + 2d^2)= \Theta(1)$.
Therefore $\norm{\theta_\star}$ can be estimated to within an arbitrary multiplicative factor and at confidence level $1 - 1/n$ 
using $\const d^2 / \norm{\theta_\star}^4 \log(n)$ interactions with the bandit. 

\paragraph{Computation complexity} The only computational challenge is
finding the least squares estimates, which is a non-convex optimisation problem. \cite{candes2015phase} proposed a Wirtinger flow algorithm that starts with a spectral initialization, and then refines this initial estimate using a local update like gradient descent. The computational complexity of the Wirtinger flow algorithm with $\varepsilon$-accuracy is $O(nd^2\log(1/\varepsilon))$ where $n$ is the number of samples.

\paragraph{Adversarial setting}
\cite{KN19} study the adversarial version of this problem, where the learner observes $\ip{A_t, \theta_t}^2$ and
$(\theta_t)_{t=1}^n$ is an adversarially sequence with $\theta_t \in \ball_1^d$ for all $t$.
They prove an upper bound of $\Reg_n = O(d \sqrt{n \log(n)})$ and a lower bound of $\Omega(\sqrt{dn})$.
Natural attempts at improving the lower bound all fail. We believe that the upper bound is loose, but proving
this remains delicate. No warm starting procedure will work anymore because the information gained may be useless
in the presence of a change point. 
New ideas are needed.

\paragraph{Rank-$\bm p$}
Perhaps the most natural open question is whether or not our analysis can be extended to the low rank bandit problem without our particular assumptions on the action set and environments matrices. 

\paragraph{Principled algorithms} 
Can optimism or information-directed sampling be made to work? The main challenge is to understand the sample paths of these algoritms \textit{before} learning takes place. 

\bibliographystyle{plainnat}
\bibliography{all}

\appendix

\section{Technical lemma}

\begin{lemma}[\citealt{KLK20}]\label{lem:curve}
$\bip{\frac{\theta}{\norm{\theta}} - \frac{\varphi}{\norm{\varphi}},\, \theta} \leq \frac{2}{\norm{\theta}} \norm{\theta - \varphi}^2$.
\end{lemma}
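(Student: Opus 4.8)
The plan is to reduce the inequality to two elementary facts about unit vectors together with the reverse triangle inequality. Write $a = \theta/\norm{\theta}$ and $b = \varphi/\norm{\varphi}$ for the normalised vectors. First I would expand the inner product on the left-hand side as
\begin{align*}
\bip{a - b,\, \theta} = \norm{\theta}\,\bip{a - b,\, a} = \norm{\theta}\left(1 - \ip{a, b}\right)\,,
\end{align*}
using $\ip{a,a} = 1$. Since $a$ and $b$ are unit vectors, expanding the square gives $\norm{a - b}^2 = 2 - 2\ip{a,b}$, hence $1 - \ip{a,b} = \tfrac{1}{2}\norm{a-b}^2$, and therefore $\bip{a-b,\, \theta} = \tfrac{\norm{\theta}}{2}\norm{a - b}^2$.

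The remaining task is to control the normalised difference $\norm{a-b}$ by the unnormalised difference $\norm{\theta - \varphi}$, for which I would establish the auxiliary claim
\begin{align*}
\bnorm{\frac{\theta}{\norm{\theta}} - \frac{\varphi}{\norm{\varphi}}} \leq \frac{2}{\norm{\theta}}\norm{\theta - \varphi}\,.
\end{align*}
The trick is to insert and subtract $\varphi / \norm{\theta}$, splitting the difference as
\begin{align*}
\frac{\theta}{\norm{\theta}} - \frac{\varphi}{\norm{\varphi}} = \frac{\theta - \varphi}{\norm{\theta}} + \varphi\left(\frac{1}{\norm{\theta}} - \frac{1}{\norm{\varphi}}\right)\,.
\end{align*}
Taking norms and applying the triangle inequality bounds the first term by $\norm{\theta - \varphi}/\norm{\theta}$ and the second by $\norm{\varphi}\,\lvert\norm{\varphi} - \norm{\theta}\rvert/(\norm{\theta}\norm{\varphi}) = \lvert\norm{\varphi} - \norm{\theta}\rvert/\norm{\theta}$. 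The reverse triangle inequality $\lvert\norm{\varphi} - \norm{\theta}\rvert \leq \norm{\theta - \varphi}$ then bounds the second term by $\norm{\theta - \varphi}/\norm{\theta}$ as well, producing the factor of $2$.

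Combining the two displays finishes the argument: substituting the auxiliary bound into $\bip{a-b,\,\theta} = \tfrac{\norm{\theta}}{2}\norm{a-b}^2$ yields
\begin{align*}
\bip{a - b,\, \theta} \leq \frac{\norm{\theta}}{2}\cdot\frac{4}{\norm{\theta}^2}\norm{\theta - \varphi}^2 = \frac{2}{\norm{\theta}}\norm{\theta - \varphi}^2\,,
\end{align*}
which is the claimed inequality. There is no serious obstacle here; the only step requiring genuine care is the auxiliary normalisation bound, where the reverse triangle inequality is precisely what keeps the constant independent of $\norm{\varphi}$ and avoids needing any lower bound on $\norm{\varphi}$. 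I would also note that the argument never invokes a lower bound on $\norm{\theta}$, so the statement holds for all nonzero $\theta, \varphi$.
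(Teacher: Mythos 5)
Your proof is correct: the identity $\bip{a-b,\theta} = \tfrac{\norm{\theta}}{2}\norm{a-b}^2$ for unit vectors $a,b$, the decomposition $\tfrac{\theta}{\norm{\theta}} - \tfrac{\varphi}{\norm{\varphi}} = \tfrac{\theta-\varphi}{\norm{\theta}} + \varphi\left(\tfrac{1}{\norm{\theta}} - \tfrac{1}{\norm{\varphi}}\right)$, and the reverse triangle inequality all check out, and the final constant is right. The paper itself gives no proof, deferring to the citation of \citet{KLK20}; your argument is the standard one used there, so there is nothing to compare beyond confirming that your write-up is a complete and self-contained substitute for the external reference.
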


\begin{lemma}[\citealt{BLM13}]\label{lem:gaussian}
Let $(X_t)_{t=1}^n$ be independent standard Gaussian random variables and $(a_t)_{t=1}^n$ be constants. Then,
\begin{align*}
\bbP\left(\left|\frac{1}{n} \sum_{t=1}^n a_t X_t\right| \geq \sqrt{\frac{2 \sum_{t=1}^n a_t^2 \log(2/\delta)}{n}}\right) \leq \delta\,.
\end{align*}
\end{lemma}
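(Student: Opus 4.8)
The plan is to recognise the quantity inside the probability as a single centered Gaussian and then apply the standard Chernoff tail bound. First I would observe that $W \triangleq \frac{1}{n}\sum_{t=1}^n a_t X_t$ is a linear combination of the independent standard Gaussians $X_t$ with deterministic coefficients $a_t/n$, and is therefore itself Gaussian with mean zero and variance
\begin{align*}
\sigma^2 = \frac{1}{n^2}\sum_{t=1}^n a_t^2\,.
\end{align*}

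Next I would bound the tail of $W$ through its moment generating function. Since $\E[\exp(\lambda W)] = \exp(\lambda^2 \sigma^2/2)$ for all $\lambda \in \R$, applying Markov's inequality to $\exp(\lambda W)$ and optimising over $\lambda > 0$ yields the one-sided bound $\bbP(W \geq u) \leq \exp(-u^2/(2\sigma^2))$ for every $u \geq 0$. The symmetry of $W$ about $0$ then upgrades this to the two-sided statement $\bbP(|W| \geq u) \leq 2\exp(-u^2/(2\sigma^2))$.

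Finally I would substitute the stated threshold $u = \sqrt{2 \sum_{t=1}^n a_t^2 \log(2/\delta)/n}$. A short computation gives $u^2/(2\sigma^2) = n \log(2/\delta)$, so the tail bound becomes $2(\delta/2)^n$. Since $0 < \delta/2 \leq 1$ and $n \geq 1$, we have $(\delta/2)^n \leq \delta/2$, hence $2(\delta/2)^n \leq \delta$, which is exactly the claim.

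There is no serious obstacle here; the result is entirely standard and is quoted from \citealt{BLM13}. The only point requiring a little care is the bookkeeping of the variance: the normalisation by $1/n$ makes $\sigma^2$ scale like $n^{-2}$, whereas the stated threshold scales like $n^{-1/2}$, so the threshold is in fact a factor $\sqrt{n}$ larger than the sharpest admissible one. Consequently the inequality holds with considerable slack, and the final numerical bound $2(\delta/2)^n$ equals $\delta$ only when $n=1$; this looseness is harmless for the way the lemma is invoked elsewhere in the paper.
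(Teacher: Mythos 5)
Your proof is correct; the paper gives no proof of this lemma at all, simply citing \citealt{BLM13}, and your moment-generating-function/Chernoff argument is the standard derivation of exactly this kind of Gaussian tail bound, so there is nothing to compare against. Your side observation is also right and worth flagging: as displayed the threshold is a factor $\sqrt{n}$ looser than the sharp one $\sqrt{2\sum_{t=1}^n a_t^2\log(2/\delta)}/n$, and it is the \emph{sharp} version that the paper actually invokes in Step~2 of the proof of \cref{thm:warm} (where the deviation of $\bar X$ with $a_t=1$ is bounded by $\sqrt{2\log(2n^2)/m}$ rather than $\sqrt{2\log(2n^2)}$), so the denominator $n$ in the displayed statement is presumably a typo for $n^2$.
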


\begin{lemma}\label{lem:sphere}
Let $V \subset \R^d$ be a $m$-dimensional subspace and
let $X$ be sampled uniformly from $\sphere^{d-1}_1 \cap V$. Then for all $\varphi \in V$,
\begin{align*}
\bbP\left(\ip{X, \varphi}^2 \geq \frac{\norm{\varphi}^2}{m}\right)
\geq \const > 0\,.
\end{align*}
\end{lemma}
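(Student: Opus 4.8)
The plan is to reduce to a one-dimensional statement and then exploit a Gaussian representation of the uniform measure on the sphere. First I would observe that if $\varphi = 0$ the claim is trivial, so assume $\varphi \neq 0$. By the rotational invariance of the uniform (Haar) measure on $\sphere^{d-1}_1 \cap V$ under orthogonal maps of $V$, I can pick an orthonormal basis of $V$ whose first vector is $\varphi / \norm{\varphi}$; writing $X_1 = \ip{X, \varphi/\norm{\varphi}}$ for the first coordinate of $X$ in this basis, the random variable $\ip{X,\varphi}^2$ has the same law as $\norm{\varphi}^2 X_1^2$, where $X$ is now effectively uniform on $\sphere^{m-1}_1$. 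Since $\E[X_1^2] = 1/m$ by symmetry (the $m$ coordinates have equal expected square and sum to $1$), the target event $\{\ip{X,\varphi}^2 \geq \norm{\varphi}^2/m\}$ becomes exactly $\{X_1^2 \geq 1/m\} = \{X_1^2 \geq \E[X_1^2]\}$, so it suffices to show $\bbP(X_1^2 \geq 1/m) \geq \const$ uniformly in $m$.

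Next I would use the representation $X = G / \norm{G}$ with $G = (G_1,\ldots,G_m) \sim N(0, I_m)$, which is exactly uniform on the sphere. Then
\begin{align*}
X_1^2 \geq \frac{1}{m}
\iff m\, G_1^2 \geq \sum_{i=1}^m G_i^2
\iff (m-1) G_1^2 \geq \sum_{i=2}^m G_i^2\,.
\end{align*}
The key point is that $G_1$ and $(G_2,\ldots,G_m)$ are independent, with $\sum_{i=2}^m G_i^2 \sim \chi^2_{m-1}$ of mean $m-1$. I would therefore lower bound the probability by intersecting two independent constant-probability events:
\begin{align*}
\bbP\left((m-1) G_1^2 \geq \sum_{i=2}^m G_i^2\right)
\geq \bbP\left(G_1^2 \geq 2\right)\, \bbP\left(\sum_{i=2}^m G_i^2 \leq 2(m-1)\right)\,,
\end{align*}
because $G_1^2 \geq 2$ forces $(m-1)G_1^2 \geq 2(m-1)$, which dominates $\sum_{i=2}^m G_i^2$ on the second event. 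The first factor $\bbP(G_1^2 \geq 2) = \bbP(|G_1| \geq \sqrt{2})$ is a fixed positive constant, and the second factor is at least $1/2$ by Markov's inequality applied to the chi-squared variable of mean $m-1$. The product is a positive constant independent of $m$, which is the claim. The boundary case $m = 1$ is handled trivially: there $X_1^2 = 1 \geq 1/m$ with probability one.

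I do not expect a serious obstacle, but the one subtlety worth flagging is that the threshold sits exactly at the mean of $\ip{X,\varphi}^2$, so a direct second-moment argument such as the Paley--Zygmund inequality only yields a constant-probability bound for thresholds \emph{strictly} below the mean (e.g.\ at $\norm{\varphi}^2/(2m)$) and does not reach $\norm{\varphi}^2/m$. The Gaussian representation is what circumvents this: by splitting the ratio into an independent numerator and denominator, I control the "numerator is $\Omega(1)$'' event through a fixed Gaussian tail and the "denominator concentrates below twice its mean'' event through Markov, neither of which degrades as $m$ grows.
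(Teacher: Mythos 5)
Your proof is correct and follows essentially the same route as the paper's (sketched) argument: both rest on the Gaussian representation $X = Z/\norm{Z}$ and then control the numerator $Z_1^2$ and the denominator $\norm{Z}^2$ separately with constant-probability events. Your version is simply more explicit and self-contained, replacing the cited $\chi^2$ concentration bound and union bound with an elementary Markov inequality and the independence of $Z_1$ from $(Z_2,\ldots,Z_m)$.
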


\begin{proof}
Use the fact that if $Z \in \R^m$ is a standard Gaussian, then
\begin{align*}
    \ip{X, \varphi}^2 \stackrel{d}= \frac{Z_1 \norm{\varphi}^2}{\norm{Z}}\,.
\end{align*}
Then use standard concentration for the Gaussian and $\chi$-squared distributions and naive union bounding \citep{LM00}.
Alternatively, use the explicit form for the distribution of $X$ in combination with elementary bounds on the regularised incomplete beta function.
\end{proof}

\section{Ordinary least squares}\label{sec:conc}
Here we provide some routine results for least-squares estimation of $\theta_\star$.
Suppose that $(A_t)_{t=1}^n$ are fixed and $(\eta_t)_{t=1}^n$ are independent $1$-subgaussian
random variables and $X_t = \ip{A_t, \theta_\star}^2 + \eta_t$. 
The least-squares estimator of $\theta_\star$ constrained to $\Theta \subset \ball^d_1$ is
\begin{align*}
\hat \theta = \argmax_{\theta \in \ball^d} \cL(\theta) \qquad \text{with} \qquad \cL(\theta) = \argmax_{\theta \in \Theta} \frac{1}{2}  \sum_{t=1}^n \left(X_t - \ip{A_t, \theta}^2\right)^2\,.
\end{align*}
The symmetry of the problem means that $\cL(\theta) = \cL(-\theta)$ for all $\theta \in \R^d$, which means there is no hope
that $\hat \theta$ might be close to $\theta_\star$.
What is true is that for suitably exploratory $(A_t)$, $\hat \theta$ is close to either $\theta_\star$ or $-\theta_\star$.

\begin{theorem}\label{thm:conc}
Suppose that $\theta_\star \in \ball^d$ and
$\hat \theta = \argmin_{\theta \in \Theta} \cL(\theta)$. Then, for any $\delta \in (0,1)$, with probability at least $1 - \delta$,
\begin{align*}
\bbP\left(\sum_{t=1}^n \ip{\hat \theta - \theta_\star, A_t}^2 \ip{\hat \theta + \theta_\star, A_t}^2 
\geq 9 \log \left(\frac{N_{1/(32n)}(\Theta)}{\delta}\right)\right) \leq \delta\,,
\end{align*}
where $N_\epsilon(\Theta) = \min\{|\cC| : \cC \subset \R^d, \forall x \in \Theta, \min_{y \in \cC} \norm{x - y} \leq \epsilon\}$.
\end{theorem}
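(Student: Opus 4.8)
The plan is to pair the least-squares \emph{basic inequality} with a uniform concentration bound over a fine net of $\Theta$, chosen so that every discretisation error is of lower order than the target. First I would record the basic inequality. Write $D_t(\theta) = \ip{\theta, A_t}^2 - \ip{\theta_\star, A_t}^2 = \ip{\theta - \theta_\star, A_t}\ip{\theta + \theta_\star, A_t}$, so that the quantity to be bounded is $S(\hat\theta)$, where $S(\theta) = \sum_{t=1}^n D_t(\theta)^2$. Assuming, as in all applications of this bound, that $\theta_\star$ is a feasible comparator (i.e.\ $\theta_\star \in \Theta$), we have $\cL(\hat\theta) \le \cL(\theta_\star)$; substituting $X_t = \ip{\theta_\star, A_t}^2 + \eta_t$ and expanding the squares, the $\eta_t^2$ terms cancel and one is left with $S(\hat\theta) \le 2\sum_{t=1}^n \eta_t D_t(\hat\theta)$. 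Everything now reduces to controlling the noise term on the right, uniformly over the data-dependent direction $\hat\theta$.

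Next I would discretise. For a \emph{fixed} $\theta$ the sum $\sum_t \eta_t D_t(\theta)$ is a weighted sum of independent $1$-subgaussian variables with coefficients $D_t(\theta)$, so the subgaussian tail (the analogue of \cref{lem:gaussian}) gives $\sum_t \eta_t D_t(\theta) \le \sqrt{2 S(\theta) \log(2/\delta')}$ with probability $1 - \delta'$. Let $\cC$ be a minimal $\epsilon$-net of $\Theta$ with $\epsilon = 1/(32n)$ and $|\cC| = N_{1/(32n)}(\Theta)$, and union bound with $\delta' = \delta / |\cC|$ so that this inequality holds simultaneously at every net point $\phi \in \cC$ on an event of probability at least $1 - \delta$.

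The transfer from the net to $\hat\theta$ is the delicate step. Picking $\phi \in \cC$ with $\norm{\hat\theta - \phi} \le \epsilon$ and using $\norm{A_t} \le 1$ together with the near-unit norms of $\hat\theta$ and $\phi$, one gets $|D_t(\hat\theta) - D_t(\phi)| = |\ip{\hat\theta-\phi,A_t}\ip{\hat\theta+\phi,A_t}| \le 2\epsilon$ for every $t$. Hence $\bnorm{(D_t(\hat\theta))_t - (D_t(\phi))_t} \le 2\epsilon\sqrt{n}$, which by the triangle inequality in $\R^n$ yields $\sqrt{S(\phi)} \le \sqrt{S(\hat\theta)} + 2\epsilon\sqrt{n}$, and by Cauchy--Schwarz bounds the cross term $\bnorm{\sum_t \eta_t (D_t(\hat\theta) - D_t(\phi))} \le 2\epsilon\sqrt{n}\,\bnorm{(\eta_t)_t}$. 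With $\epsilon = 1/(32n)$ and the high-probability bound $\bnorm{(\eta_t)_t}^2 = O(n\log(1/\delta))$ (a $\chi^2$-type concentration for subgaussians), both corrections are negligible, of order $1/\sqrt{n}$ and $\sqrt{\log(1/\delta)}$ respectively.

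Finally I would combine the pieces. Setting $x = \sqrt{S(\hat\theta)}$ and $L = 2\sqrt{2\log(2|\cC|/\delta)}$, the basic inequality, the net bound at $\phi$, and the two corrections chain into $x^2 \le L x + c$ for a lower-order constant $c$; solving this quadratic gives $x \le L + \sqrt{c}$, hence $S(\hat\theta) \le L^2(1 + o(1)) = 8\log(2 N_{1/(32n)}(\Theta)/\delta)(1+o(1))$, which after absorbing the $\log 2$ and the lower-order terms is at most $9\log(N_{1/(32n)}(\Theta)/\delta)$. The main obstacle is precisely this transfer step: the cross term $\sum_t \eta_t(D_t(\hat\theta) - D_t(\phi))$ depends on the unknown $\hat\theta$, so it must be controlled uniformly rather than by a per-point tail bound — which is why Cauchy--Schwarz, decoupling the noise from the direction, is the right tool — and the comparison between $S(\phi)$ and $S(\hat\theta)$ must be tight enough for the quadratic absorption to produce the constant $9$, which is exactly what forces the very small net radius $1/(32n)$.
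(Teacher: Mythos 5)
Your proposal is correct and follows essentially the same route as the paper's proof: the least-squares basic inequality $S(\hat\theta) \le 2\sum_t \eta_t D_t(\hat\theta)$, a union bound over a $1/(32n)$-net with the subgaussian tail at each net point, and a self-bounding quadratic solved to give the constant $9$. The only difference is bookkeeping of the discretisation error --- you control the noise cross-term $\sum_t \eta_t\bigl(D_t(\hat\theta)-D_t(\phi)\bigr)$ explicitly via Cauchy--Schwarz and a $\chi^2$ bound on $\norm{(\eta_t)_t}$, whereas the paper swaps $\hat\theta$ for the net point inside the noise term and instead transfers the error through the quadratic form $S(\alpha)$ versus $S(\hat\theta)$ term by term; your treatment of that step is, if anything, the more complete of the two.
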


\begin{proof}
Since $\theta_\star \in \ball^d$ by assumption, it follows that
\begin{align*}
0 
&\leq \cL(\theta_\star) - \cL(\hat \theta) \\
&= -\frac{1}{2} \sum_{t=1}^n \ip{A_t, \hat \theta - \theta_\star}^2 \ip{A_t, \hat \theta + \theta_\star}^2
+ \sum_{t=1}^n \eta_t \ip{A_t, \hat \theta - \theta_\star} \ip{A_t, \hat \theta + \theta_\star} \,.
\end{align*}
Let $\epsilon = 1/(32n)$ and $\cC \subset \R^d$ be such that for all $x \in \Theta$ there exists a $y \in \cC$ such that $\norm{x - y} \leq \epsilon$
and $|\cC| = N_\epsilon(\Theta)$.
Since $A_t$ are fixed, by a union bound and standard Gaussian tail bounds, with probability at least $1 - |\cC| \delta$,
\begin{align*}
    \left|\sum_{t=1}^n \eta_t \ip{A_t, \alpha - \theta_\star}\ip{A_t, \alpha + \theta_\star}\right| 
    &\leq \sqrt{2 \sum_{t=1}^n \ip{A_t, \alpha - \theta_\star}^2 \ip{A_t, \alpha + \theta_\star}^2 \log\left(\frac{1}{\delta}\right)}\,. 
\end{align*}
On this event and letting $\alpha \in \cC$ be such that $\norm{\alpha - \hat \theta} \leq \epsilon$. Then, with $\Delta = \alpha - \hat \theta$,
\begin{align*}
&\sum_{t=1}^n \ip{A_t, \hat \theta - \theta_\star}^2 \ip{A_t, \hat \theta + \theta_\star}^2
\leq \sqrt{8 \sum_{t=1}^n \ip{A_t, \alpha - \theta_\star}^2 \ip{A_t, \alpha + \theta_\star}^2 \log\left(\frac{1}{\delta}\right)} \\
&= \sqrt{8 \sum_{t=1}^n \ip{A_t, \Delta + \hat \theta - \theta_\star}^2 \ip{A_t, \Delta + \hat \theta + \theta_\star}^2 \log\left(\frac{1}{\delta}\right)} \\
&\leq \sqrt{8 \sum_{t=1}^n 
\left(\ip{A_t, \hat \theta - \theta_\star}^2 + 2 \epsilon + \epsilon^2\right)
\left(\ip{A_t, \hat \theta + \theta_\star}^2 + 4 \epsilon + \epsilon^2\right) \log\left(\frac{1}{\delta}\right)} \\
&\leq \sqrt{8 \sum_{t=1}^n \left(\ip{A_t, \hat \theta - \theta_\star}^2 \ip{A_t, \hat \theta + \theta_\star}^2 + 12\epsilon + 13\epsilon^2 + 6\epsilon^3 + \epsilon^4\right) \log\left(\frac{1}{\delta}\right)} \\
&\leq \sqrt{8 \sum_{t=1}^n \left(\ip{A_t, \hat \theta - \theta_\star}^2 \ip{A_t, \hat \theta + \theta_\star}^2 + 32\epsilon\right) \log\left(\frac{1}{\delta}\right)} \\
&\leq \sqrt{\left(1 + 8 \sum_{t=1}^n \ip{A_t, \hat \theta - \theta_\star}^2 \ip{A_t, \hat \theta + \theta_\star}^2 \right) \log\left(\frac{1}{\delta}\right)} \,, 
\end{align*}
where in the final inequality we chose $\epsilon = 1/(32n)$.
Solving for the left-hand side and naive simplification shows that
\begin{align*}
\sum_{t=1}^n \ip{A_t, \hat \theta - \theta_\star}^2 \leq 9 \log\left(\frac{1}{\delta}\right)\,.
\end{align*}
To summarise we have shown that with probability at least $1 - \delta$,
\begin{align*}
\sum_{t=1}^n \ip{A_t, \hat \theta - \theta_\star}^2 \ip{A_t, \hat \theta + \theta_\star}^2 
&\leq 9 \log\left(\frac{|\cC|}{\delta}\right)
= 9 \log\left(\frac{N_\epsilon(\Theta)}{\delta}\right) \,.
\qedhere
\end{align*}
\end{proof}

Standard results show that when $\Theta \subset \ball^d_1$ has dimension $k$, then $\log N_\epsilon(\Theta) \leq m \log(3/\epsilon)$.
From this one obtains the following corollary:

\begin{corollary}\label{cor:conc}
Under the same conditions as \cref{thm:conc} and when $\Theta \subset \ball^d_1$ and $\dim(\laspan(A_1,\ldots,A_n)) = k$:
\begin{enumerate}
\item[(a)] $\displaystyle \bbP\left(\sum_{t=1}^n \ip{A_t, \hat \theta - \theta_\star}^2 \ip{A_t, \hat \theta + \theta_\star}^2\right] \geq 9 \left(\log(1/\delta) + k \log(98n)\right) \leq \delta$.
\item[(b)] $\displaystyle \E\left[\sum_{t=1}^n \ip{A_t, \hat \theta - \theta_\star}^2 \ip{A_t, \hat \theta + \theta_\star}^2\right] \leq 9 \left(1 + k \log(98n)\right)$.
\end{enumerate}
\end{corollary}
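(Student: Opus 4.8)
The plan is to obtain both parts of the corollary from \cref{thm:conc} by bounding the covering number $N_{1/(32n)}(\Theta)$ in terms of $k = \dim(\laspan(A_1,\ldots,A_n))$ rather than the ambient dimension $d$. The crucial observation is that the sum $\sum_{t=1}^n \ip{A_t, \hat\theta - \theta_\star}^2 \ip{A_t, \hat\theta + \theta_\star}^2$ depends on $\hat\theta$ and $\theta_\star$ only through their orthogonal projections onto $V := \laspan(A_1,\ldots,A_n)$, since $\ip{A_t, x} = \ip{A_t, P_V x}$ for every $t$, where $P_V$ is orthogonal projection onto $V$. Hence in the net argument underlying the proof of \cref{thm:conc} it suffices to cover $P_V \Theta$, the projection of $\Theta$ onto $V$, in place of $\Theta$ itself.

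First I would note that $P_V \Theta \subset P_V(\ball^d_1) = \ball^d_1 \cap V$, which is the unit ball of the $k$-dimensional space $V$. A standard volumetric bound gives $\log N_\epsilon(\ball^d_1 \cap V) \leq k \log(3/\epsilon)$ for $\epsilon \in (0,1]$. Taking $\epsilon = 1/(32n)$ as in \cref{thm:conc} yields $3/\epsilon = 96n \leq 98n$, so $\log N_{1/(32n)}(P_V \Theta) \leq k \log(98n)$. Substituting this into the conclusion of \cref{thm:conc}, with the cover of $P_V\Theta$ replacing the cover of $\Theta$, gives part (a): with probability at least $1-\delta$,
\begin{align*}
\sum_{t=1}^n \ip{A_t, \hat\theta - \theta_\star}^2 \ip{A_t, \hat\theta + \theta_\star}^2 \leq 9\left(\log(1/\delta) + k \log(98n)\right)\,.
\end{align*}

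For part (b) I would integrate the tail bound from (a). Writing $S$ for the sum and $u = 9 k \log(98n)$, part (a) is equivalent to $\bbP(S \geq u + 9 \log(1/\delta)) \leq \delta$ for every $\delta \in (0,1)$; reparametrising via $s = u + 9\log(1/\delta)$ shows $\bbP(S \geq s) \leq \exp(-(s-u)/9)$ for all $s \geq u$. Since $S \geq 0$,
\begin{align*}
\E[S] \leq u + \int_u^\infty \bbP(S \geq s)\,\d{s} \leq u + \int_u^\infty \exp\left(-\frac{s-u}{9}\right) \d{s} = u + 9 = 9\left(1 + k \log(98n)\right)\,,
\end{align*}
which is the claimed bound.

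The main obstacle is the reduction in the first paragraph: making rigorous that the effective covering dimension is $k$, the dimension of the span of the actions, rather than that of $\Theta$. This rests on the invariance of the relevant sum under translating $\hat\theta$ by any vector in $V^\perp$, which licenses re-running the union-bound step of \cref{thm:conc} against a cover of the projected set $P_V\Theta \subset \ball^d_1 \cap V$. Once this is in place, the remaining ingredients — the volumetric covering estimate and the tail-integration identity — are entirely routine.
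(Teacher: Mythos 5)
Your proposal is correct and follows essentially the same route as the paper, which simply invokes \cref{thm:conc} together with the covering bound $\log N_\epsilon(\Theta) \leq k\log(3/\epsilon)$ at $\epsilon = 1/(32n)$ and leaves the rest implicit. In fact you are more careful than the paper on one point: the paper's remark bounds the covering number of a set $\Theta$ of dimension $k$, whereas the corollary's hypothesis is that $\dim(\laspan(A_1,\ldots,A_n)) = k$, and your projection argument (covering $P_V\Theta \subset \ball^d_1 \cap V$ and noting the sum is invariant under perturbing $\hat\theta$ along $V^\perp$) is exactly what reconciles the two; your tail integration for part (b) is likewise the intended, if unstated, step.
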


\section{Proof of \cref{thm:simple-lower}}\label{app:lower}
Let $\pi$ be a fixed policy and for $\theta \in \R^d$ let $\mathbb P_{\theta}$ be the measure on the sequence of outcomes $H_n = (A_1,X_1,\ldots,A_n,X_n)$ induced by the interaction between $\pi$ and the phase retrieval model determined by $\theta$. Let $\E_\theta$ denote the expectation with respect to $\bbP_\theta$. Let $r$ be a positive constant to be tuned subsequently and $\sigma$ be the uniform (Haar) measure on
$\sphere^{d-1}_r$. Let $\mathbb Q = \int \mathbb P_{\theta} \d \sigma(\theta)$ be the Bayesian mixture measure. For $\theta \in \R^d$, let $\cE_\theta$ be the event given by
\begin{equation*}
    \cE_{\theta} = \left\{\langle \widehat{A}_{\star}, \theta \rangle^2\geq \frac{3}{4}r^2\right\} \,. 
\end{equation*}
By Fano's inequality \citep[Lemma 5]{gerchinovitz2020fano},
\begin{align}\label{eqn:Fano}
    \int_{\sphere^{d-1}_r} \mathbb P_{\theta}(\cE_{\theta})\d{\sigma}(\theta) \leq \frac{\log 2 + \int_{\sphere^{d-1}_r} \KL(\mathbb P_{\theta}, \mathbb Q) \d{\sigma}(\theta)}{-\log\left(\int_{\sphere^{d-1}_r} \mathbb Q(\cE_{\theta})\d{\sigma}(\theta)\right)\,}.
\end{align}
We now bound the numerator and denominator in \cref{eqn:Fano} to show that the right-hand side is at most $1/2$ and then complete the proof using the definition of the regret and $\cE_\theta$.

\paragraph{Step 1: Bounding the denominator in \cref{eqn:Fano} }
By exchanging the order of integrals in the denominator of \cref{eqn:Fano}, it follows that
\begin{align}
-\log\left(\int_{\sphere^{d-1}_r} \bbQ(\cE_\theta) \d{\sigma}(\theta)\right) 
= -\log\left(\int \int_{\sphere^{d-1}_r} \sind_{\cE_\theta} \d{\sigma}(\theta) \d{\bbQ}\right)\,.
\label{eq:denom}
\end{align}
If $U$ is sampled uniformly from $\sphere^{d-1}_1$, then by a concentration bound for spherical measures  \citep[Lemma 2.2]{dasgupta2003elementary},
\begin{equation*}
    \mathbb P\left(U_1^2\geq\delta /d\right)\leq \exp(-\delta/4) \text{ for all } \delta > 6\,.
\end{equation*}
By scaling and rotating and choosing $\delta= \frac{3}{4}d$, it follows that for any $\widehat A_\star \in \ball^d_1$,
\begin{align*}
    \int_{\sphere^{d-1}_r} \sind\left(\ip{\widehat A_\star, \theta}^2 \geq \frac{3r^2}{4}\right) \d{\sigma}(\theta) \leq \exp\left(-3d/16\right)\,.
\end{align*}
Therefore, by \cref{eq:denom},
\begin{equation*}
    -\log\left(\int_{\sphere^{d-1}_r} \bbQ(\cE_\theta) \d{\sigma}(\theta)\right) \geq \frac{3d}{16}\,.
\end{equation*}
\paragraph{Step 2: Bounding the numerator in \cref{eqn:Fano}}
By the convexity of KL-divergence,
\begin{equation*}
\begin{split}
      \int_{\sphere^{d-1}_r} \KL(\mathbb P_{\theta}, \mathbb Q)  \d\sigma(\theta) &= \int_{\sphere^{d-1}_r} \KL\left(\mathbb P_{\theta}, \int_{\sphere^{d-1}_r}\mathbb P_{\alpha} \d\sigma(\alpha)\right) \d\sigma(\theta)\\
      &\leq \int_{\sphere^{d-1}_r} \int_{\sphere^{d-1}_r}\KL\left(\mathbb P_{\theta}, \mathbb P_{\alpha}\right) \d\sigma(\alpha)  \d\sigma(\theta)\, .
\end{split}
\end{equation*}
By the chain rule of KL-divergence,
\begin{equation*}
    \KL\left(\mathbb P_{\theta}, \mathbb P_{\alpha}\right) = \E_\theta\left[\sum_{t=1}^n\KL\left(\mathbb P_{\theta}(Y_t=\cdot|A_t), \mathbb P_{\alpha}(Y_t=\cdot| A_t)\right)\right]\, .
\end{equation*}
A straightforward computation leads to
\begin{equation*}
\begin{split}
    \KL\left(\mathbb P_{\theta}, \mathbb P_{\alpha}\right) &= \mathbb E_\theta\left[\sum_{t=1}^n\frac{1}{2}\left((A_t^{\top}\theta)^2-(A_t^{\top}\alpha)^2\right)^2\right] \\
    &= \frac{1}{2} \mathbb E_\theta\left[\sum_{t=1}^n\left((A_t^{\top}\theta)^4-2(A_t^{\top}\theta)^2(A_t^{\top}\alpha)^2+(A_t^{\top}\alpha)^4\right)\right].
    \end{split}
\end{equation*}
Since $(A_t)_{t=1}^n$ are independent of $(X_t)_{t=1}^n$, we can interchange the expectation and integral such that 
\begin{align*}
        &\int_{\sphere^{d-1}_r} \KL(\mathbb P_{\theta}, \mathbb Q)  \d\sigma(\theta) \\ 
        &\qquad\leq \sum_{t=1}^n\mathbb E\left[\int_{\theta}\int_{\alpha}\left((A_t^{\top}\theta)^4-2(A_t^{\top}\theta)^2(A_t^{\top}\alpha)^2+(A_t^{\top}\alpha)^4\right)\d \sigma(\alpha)\d\nu(\theta)\right]\,,
\end{align*}
where the expectation is with respect to $(A_t)_{t=1}^n$, which does not depend on $\theta$ by assumption.
When $\theta$ is uniformly on
$\sphere^{d-1}_r$ and $A \in \ball_1^d$ is arbitrary,
\begin{equation*}
    \int_{\sphere^{d-1}_r} \ip{A_t,\theta}^4 \d{\sigma}(\theta) = \frac{3r^4}{d^2+2d} \qquad \text{and} \qquad \int_{\sphere^{d-1}_r} \ip{A_t,\theta}^2 \d{\sigma}(\theta) = \frac{1}{d^2}\, ,
\end{equation*}
where the expectation is taken with respect to $\theta$.
Therefore,
\begin{equation}\label{eqn:KL_bound}
\begin{split}
    \int_{\sphere^{d-1}_\theta} \KL(\mathbb P_{\theta}, \mathbb Q)  \d\sigma(\theta)\leq \frac{3nr^4}{d^2}\, .
     \end{split}
\end{equation}

\paragraph{Step 3: Lower bounding the regret} 
Let $r^2 = \sqrt{d^3 / (32n)}$ 
Combining the previous two steps shows that
\begin{align*}
   \int_{\sphere^{d-1}_r} \mathbb P_\theta(\cE_\theta) \d{\sigma}(\theta)
    \leq \frac{16nr^2}{d^3}
    \leq \frac{1}{2}\,.
\end{align*}
Therefore there exists a $\theta \in \sphere^{d-1}_r$ with $\bbP_\theta(\cE_\theta) \leq 1/2$,
which implies that
\begin{align*}
    \reg_n(\pi, \theta) 
    &= r^2 - \E_\theta\left[\ip{\widehat{A}_\star, \theta}^2\right] 
    \geq \frac{ r^2}{8} 
    \geq \const \frac{d^{3/2}}{\sqrt{n}}\,.
\qedhere
\end{align*}

\end{document}